\documentclass{article}

\usepackage{microtype}
\usepackage{graphicx}
\usepackage{subfigure}
\usepackage{booktabs} %

\usepackage{hyperref}

\usepackage[accepted]{icml2025}

\usepackage{amsmath}
\usepackage{amssymb}
\usepackage{mathtools}
\usepackage{amsthm}

\usepackage[capitalize,noabbrev]{cleveref}

\usepackage{url}
\usepackage{math}
\usepackage{thm-restate}
\definecolor{color0}{HTML}{ffffff} %
\definecolor{color1}{HTML}{e6194b} %
\definecolor{color2}{HTML}{42d4f4} %
\definecolor{color3}{HTML}{f58231} %
\definecolor{color4}{HTML}{800000} %
\definecolor{color5}{HTML}{000075} %
\definecolor{color6}{HTML}{3cb44b} %
\definecolor{color7}{HTML}{911eb4} %
\definecolor{color8}{HTML}{fabed4} %
\definecolor{color9}{HTML}{fffac8} %
\definecolor{color10}{HTML}{ffe119} %
\definecolor{color11}{HTML}{4363d8} %
\definecolor{color12}{HTML}{bfef45} %
\definecolor{color13}{HTML}{dcbeff} %
\def\ccirc#1{\tikz \filldraw[fill=#1, draw=black] (0,0) circle (.5ex);}
\def\dotcirc{\tikz \filldraw[fill=color0, draw=black, dotted] (0,0) circle (.5ex);}
\def\dotcirc{\tikz \filldraw[fill=color0, draw=black, densely dotted] (0,0) circle (.5ex);}
\newcommand{\inlineedge}[1]{\tikz{\phantom{\filldraw[fill=color0, draw=black, densely dotted] (0,0) circle (.5ex);}; \draw[#1] (0,0) -- (0.7em, 0);}}

\usepackage{chemfig}
\usepackage{multirow}
\newcommand{\LINECOMMENT}[1]{\STATE /* #1 */}
\newcommand{\SUBCOMMENT}[1]{\# #1}
\newcommand{\cn}[1]{\textcolor{red}{#1}}
\usepackage{array}
\usepackage{cancel}

\usepackage[export]{adjustbox}
\usepackage{subcaption}
\usepackage{mwe}

\newcommand{\reals}[0]{\mathbb{R}}
\usepackage{tikz}
\usetikzlibrary{calc,math} 
\usetikzlibrary{positioning,decorations.pathmorphing}

\theoremstyle{plain}
\newtheorem{theorem}{Theorem}[section]
\newtheorem{proposition}[theorem]{Proposition}

\theoremstyle{definition}
\newtheorem{definition}[theorem]{Definition}

\theoremstyle{remark}

\icmltitlerunning{WILTing Trees: Interpreting the Distance Between MPNN Embeddings}

\begin{document}

\twocolumn[
\icmltitle{WILTing Trees: Interpreting the Distance Between MPNN Embeddings}

\begin{icmlauthorlist}
\icmlauthor{Masahiro Negishi}{}
\icmlauthor{Thomas Gärtner}{tuwien}
\icmlauthor{Pascal Welke}{leipzig,tuwien}
\end{icmlauthorlist}

\icmlaffiliation{tuwien}{TU Wien, Vienna, Austria}
\icmlaffiliation{leipzig}{Lancaster University Leipzig, Leipzig, Germany}

\icmlcorrespondingauthor{Masahiro Negishi}{m.negishi25@imperial.ac.uk}

\icmlkeywords{Machine Learning, ICML}

\vskip 0.3in
]

\printAffiliationsAndNotice{}  %

\begin{abstract}
We investigate the distance function learned by message passing neural networks (MPNNs) in specific tasks, aiming to capture the \emph{functional} distance between prediction targets that MPNNs implicitly learn. 
This contrasts with previous work, which links MPNN distances on arbitrary tasks to \emph{structural} distances on graphs that ignore task-specific information. 
To address this gap, we distill the distance between MPNN embeddings into an interpretable graph distance. 
Our method uses optimal transport on the Weisfeiler Leman Labeling Tree (WILT), where the edge weights reveal subgraphs that strongly influence the distance between embeddings. 
This approach generalizes two well-known graph kernels and can be computed in linear time. 
Through extensive experiments, we demonstrate that MPNNs define the relative position of embeddings by focusing on a small set of subgraphs that are known to be functionally important in the domain.
\end{abstract}

\section{Introduction}
\label{sec:intro}
Message passing graph neural networks (MPNNs) have achieved high predictive performance in various domains \citep{zhou2020graph}. 
To understand these performance gains, researchers have focused on the expressive power of MPNNs \citep{morris2019weisfeiler,gin,maron2019provably}. 
However, the binary nature of expressive power excludes any analysis of the distance between graph embeddings, which is considered to be a key to the predictive power of MPNNs \citep{graphsslsurvey,gnnbook-ch5-li,morris2024}.
Recently, there has been growing interest in the analysis of MPNN (generalization) performance using \emph{structural} distances between graphs \cite{tmd,boker2024fine,wlmargin2024} that consider graph topology but ignore the target function to be learned.
That line of work has derived generalization bounds under strong assumptions on the margin between classes or on the Lipschitz constants of MPNNs, which often do not hold in practice.
In this work, we instead investigate the distance $d_\text{MPNN}$ implicitly obtained from a popular MPNN trained on a practical dataset.

Specifically, we ask: \emph{What properties does the distance $d_\text{MPNN}$ learned by a well-performing MPNN have in practice that can explain its high performance?}
While previous studies \citep{tmd,boker2024fine} focused on the alignment between $d_\text{MPNN}$ and a non-task-tailored structural graph distance $d_\text{struc}$, we have found that it is not critical to the predictive performance of MPNNs. 
Rather, even if an MPNN was trained with classical cross-entropy loss, $d_\text{MPNN}$ respects the task-relevant functional distance $d_\text{func}$.
Furthermore, the alignment between $d_\text{MPNN}$ and $d_\text{func}$ is highly correlated with the predictive performance of the MPNN.
Then, we move to our second question: \emph{How do MPNNs learn such a metric structure?}
Since $d_\text{MPNN}$ is essentially a distance between multisets of Weisfeiler Leman (WL) subgraphs, we distill $d_\text{MPNN}$ into a more interpretable distance between these multisets.
Our distance $d_\text{WILT}$ is an optimal transport distance on a \emph{weighted Weisfeiler Leman Labeling Tree} (WILT), which is a trainable generalization of the graph distances of existing high-performance kernels \citep{kriege2016wloa,wassersteinwl}.
It allows us to identify WL subgraphs whose presence or absence significantly affects the relative position of graphs in the MPNN embedding space.
In addition, $d_\text{WILT}$ is efficiently computable as the ground metric is a path length on a tree, and is at least as expressive as $d_\text{MPNN}$ in terms of binary expressive power.
We show experimentally that the WILTing tree distances fit MPNN distances well. 
Examination of the resulting edge parameters on a WILT after distillation shows that only a small number of WL subgraphs determine $d_\text{MPNN}$.
In a qualitative experiment, the subgraphs that strongly influence $d_\text{MPNN}$ are those that are known to be functionally important by domain knowledge.
In short, our contributions are as follows:
\begin{itemize}
    \item We show that MPNN distances after training are aligned with the task-relevant functional distance of the graphs and that this is key to the high predictive performance of MPNNs.
    \item We propose a trainable graph distance on a weighted Weisfeiler-Lehman Labeling Tree (WILT) that generalizes Weisfeiler Leman-based distances and is efficiently computable.
    \item WILT allows a straightforward definition of \emph{relevant} subgraphs. Thus, distilling an MPNN into a WILT enables us to identify subgraphs that strongly influence the distance between MPNN embeddings, allowing an interpretation of the MPNN embedding space. 
\end{itemize}

\section{Related Work}
\label{related}

Recently, it has become increasingly recognized that the geometry of the MPNN embedding space, not just its binary expressiveness, is crucial to its performance \citep{gnnbook-ch5-li,morris2024}.
For instance, graph contrastive learning methods implicitly assume that good metric structure in the embedding space leads to high performance \citep{graphsslsurvey}. 
\citet{tmd,boker2024fine,wlmargin2024} analyzed MPNN's (generalization) performance using \emph{structural} distances between graphs, which consider graph topology but ignore the learning task at hand.
Despite their theoretically sound analyses of generalization bounds, they made strong assumptions about the Lipschitz constants of MPNNs \citep{tmd,boker2024fine} or the margin between classes \citep{wlmargin2024}.
In addition, \citet{boker2024fine} dealt only with dense graphs and required the consideration of all MPNNs with some Lipschitz constant.
Our study also focuses on the geometry of the embedding space, but we empirically investigate practical MPNNs trained on real, sparse graphs, mainly using task-dependent \emph{functional} distances without limiting Lipschitz constants or the margin between classes.

This study is also related to GNN interpretability \citep{GNNBook-ch7-liu, yuan2022explainability}. 
Higher interpretability of well-performing models may lead to a new understanding of scientific phenomena when applied to scientific domains such as chemistry or biology. 
In addition, it may be a requirement in real-world problems where safety or privacy is critical.
Most of the existing interpretation methods are instance-level, identifying input features in a given input graph that are important for its prediction. 
However, instance-level methods cannot explain the global behavior of GNNs. 
Recently, some studies have proposed a way to understand the global behavior of GNNs by distilling them into highly interpretable models.  
The resulting model can be a GNN with higher interpretability \citep{graphchef2024}, or a logical formula \citep{azzolin2022global,koehler2024utilizing,pluska2024logicaldistillation}.
Our study also distills an MPNN into a highly interpretable WILT for global-level understanding.
This allows to interpret the metric structure of the MPNN embedding space, while previous studies focused on generating explanations for each label class. 
In addition, our method can be applied to graph regression tasks, while previous studies are restricted to classification problems.

\section{Preliminaries}
\label{sec:prelim}
We define a graph as a tuple $G = (V, E, l_\text{node}, l_\text{edge})$, where $V$ and $E$ are the set of nodes and edges.
Each node and each edge have an attribute defined by $l_\text{node}: V \to \Sigma_\text{node}$ and $l_\text{edge}: E \to \Sigma_\text{edge}$, where $\Sigma_\text{node}$ and $\Sigma_\text{edge}$ are finite sets. 
We restrict them to finite sets because our method is based on the Weisfeiler Leman test described below, which is discrete in nature.
We denote the set of all graphs up to isomorphism as $\cG$ and the set of neighbors of node $v$ as $\cN(v)$.
We only consider undirected graphs, but the extension to directed graphs is easy by employing an appropriate version of the Weisfeiler Leman test.

\textbf{Message Passing Algorithms} \citep{mpnn} include popular GNNs such as Graph Convolutional Networks \citep[GCN,][]{gcn}, and Graph Isomorphism Networks \citep[GIN,][]{gin}. 
At each iteration, a message passing algorithm updates the embeddings of all nodes by aggregating the embeddings of themselves and their neighbors in the previous iteration. 
After $L$ iterations, the node embeddings are aggregated into the graph embedding $h_G$:
\begin{align*}
a_v^{(l)} &= \text{AGG}^{(l)}\!\left(\{\!\!\{(h_u^{(l-1)}, l_\text{edge}(e_{vu})) \mid u \in \cN(v)\}\!\!\}\right),\\
h_v^{(l)} &= \text{UPD}^{(l)}\!\left(h_v^{(l-1)}, a_v^{(l)}\!\right),\\
h_G &= \text{READ}\left(\{\!\!\{h_v^{(L)} \mid v \in V\}\!\!\}\right).
\end{align*}
Here, $\{\!\!\{ \cdot \}\!\!\}$ denotes a multiset, and $0 < l \leq L$ with $h_v^{(0)} = l_{\text{node}}(v)$.
$h_v^{(l)} \in \bbR^d$ and $h_G \in \bbR^{d'}$ are the embedding of node $v$ after the $l$-th layer and the graph embedding, respectively.
$\text{AGG}^{(l)}$, $\text{UPD}^{(l)}$, and READ are functions. 
\emph{Message Passing Neural Networks} (MPNNs) implement $\text{UPD}^{(l)}$ and $\text{AGG}^{(l)}$ using multilayer perceptrons (MLPs).
Sum and mean pooling are popular for READ.

\begin{figure}[t]
    \centering
    \begin{tikzpicture}

\node (H) at (5.3,4.3) {
\begin{tikzpicture}[every node/.style={draw,circle}]
    \node[fill=color2](a){}; %
    \node[fill=color7, inner sep=1.5pt](a1) at ($(a)+(5pt,5pt)$){};
    \node[fill=color13, inner sep=1.5pt](a2) at ($(a1)+(5pt,0)$){};
    \node[right=of a, fill=color1](b){}; %
    \node[fill=color3, inner sep=1.5pt](b1) at ($(b)+(5pt,5pt)$){};
    \node[fill=color9, inner sep=1.5pt](b2) at ($(b1)+(5pt,0)$){};
    \node[below=of a, fill=color1](c){}; %
    \node[fill=color3, inner sep=1.5pt](c1) at ($(c)+(5pt,5pt)$){};
    \node[fill=color9, inner sep=1.5pt](c2) at ($(c1)+(5pt,0)$){};
    \node[right=of c, fill=color2](d){}; %
    \node[fill=color7, inner sep=1.5pt](d1) at ($(d)+(5pt,5pt)$){};
    \node[fill=color13, inner sep=1.5pt](d2) at ($(d1)+(5pt,0)$){};
    \foreach \u / \v in {a/b,a/c,b/d,c/d}
        \draw[-](\u)--(\v);
    \draw[dotted](a)--(d);
\end{tikzpicture}
};
\node[left=0cm of H.north west, anchor=north east] {$H$};

\node (G) at (1.5,4.3) {
\begin{tikzpicture}[every node/.style={draw,circle}]
    \node[fill=color2](a){}; %
    \node[fill=color5, inner sep=1.5pt](a1) at ($(a)+(5pt,5pt)$){};
    \node[fill=color11, inner sep=1.5pt](a2) at ($(a1)+(5pt,0)$){};
    \node[right=of a, fill=color1](b){}; %
    \node[fill=color3, inner sep=1.5pt](b1) at ($(b)+(5pt,5pt)$){};
    \node[fill=color8, inner sep=1.5pt](b2) at ($(b1)+(5pt,0)$){};
    \node[below=of a, fill=color1](c){}; %
    \node[fill=color3, inner sep=1.5pt](c1) at ($(c)+(5pt,5pt)$){};
    \node[fill=color8, inner sep=1.5pt](c2) at ($(c1)+(5pt,0)$){};
    \node[right=of c, fill=color2](d){}; %
    \node[fill=color6, inner sep=1.5pt](d1) at ($(d)+(5pt,5pt)$){};
    \node[fill=color12, inner sep=1.5pt](d2) at ($(d1)+(5pt,0)$){};
    \node[right=of d, fill=color1](e){}; %
    \node[fill=color4, inner sep=1.5pt](e1) at ($(e)+(5pt,5pt)$){};
    \node[fill=color10, inner sep=1.5pt](e2) at ($(e1)+(5pt,0)$){};
    \foreach \u / \v in {a/b,a/c,b/d,c/d}
        \draw[-](\u)--(\v);
    \draw[dotted](d)--(e);
\end{tikzpicture}
};
\node[left=0cm of G.north west, anchor=north east] {$G$};

\end{tikzpicture}
    \caption[]{Example of two iterations of the Weisfeiler Leman algorithm.  \ccirc{color1} and \ccirc{color2} are colors corresponding to initial node attributes, while $\!\inlineedge{-}$ and $\!\inlineedge{dotted}$ represent edges with different attributes. Node colors in iterations one and two are shown in the small circles next to the nodes. For example, $\ccirc{color3} = \mathrm{HASH} (\ccirc{color1}, \{\!\!\{ (\ccirc{color2}, \!\inlineedge{-}), (\ccirc{color2}, \!\inlineedge{-}) \}\!\!\})$ and $\ccirc{color12} = \mathrm{HASH} (\ccirc{color6}, \{\!\!\{ (\ccirc{color3}, \!\inlineedge{-}), (\ccirc{color3}, \!\inlineedge{-}), (\ccirc{color4}, \!\inlineedge{dotted}) \}\!\!\})$, where HASH is a perfect hash function.\label{fig:wltest}}
\end{figure}

\textbf{The Weisfeiler Leman (WL) Algorithm} is a message passing algorithm, where $\text{UPD}^{(l)}$ is an injective function.
$\text{AGG}^{(l)}$ and $\text{READ}$ are the identity function on multisets.
A node embedding of the WL algorithm is called \emph{color}. We use $c_v^{(l)}$ instead of $h_v^{(l)}$ to refer to it.
Figure~\ref{fig:wltest} shows the progress of the WL algorithm on two graphs:
$G$ and $H$ start with the same colors, but no longer share colors after two iterations, i.e.,
$\{\!\!\{c_v^{(2)} \mid v \in V_G\}\!\!\} \cap \{\!\!\{c_v^{(2)} \mid v \in V_H\}\!\!\} = \emptyset$.

\textbf{Message Passing Pseudometrics}
The WL algorithm cannot distinguish some nonisomorphic graphs \citep{cfi} and all MPNNs are bounded by its expressiveness \citep{gin}.
Hence, any MPNN yields a pseudometric on the set of pairwise nonisomorphic graphs $\cG$.
\begin{definition}[Graph Pseudometric]
A graph pseudometric space $(\cG,d)$ is given by a non-negative real valued function $d: \cG \times \cG \to \mathbb{R}_{\geq 0}$ that satisfies for all $F,G,H \in \cG$:
\begin{align*}
     d(G,G)=0 && \text{\textbf{(Identity)}} \\
     d(G,H)=d(H,G) && \text{\textbf{(Symmetry)}} \\
     d(G,F)\leq d(G,H)+d(H,F) && \text{\textbf{(Triangle inequality)}} 
\end{align*}
\end{definition}
Given an MPNN, we obtain a pseudometric space $(\cG,d_{\text{MPNN}})$ by setting $d_{\text{MPNN}}(G,H) \coloneqq d(h_G, h_H)$, where $d: \reals^{d'} \times \reals^{d'} \to \reals$ is a (pseudo)metric and $h_G$ and $h_H$ are graph embeddings. 
Note that $(\cG,d_{\text{MPNN}})$ is not a metric space since there are nonisomorphic graphs $G, H$ with identical representations and hence $d_{\text{MPNN}}(G, H) = 0$.
For the rest of this paper, we will use $d_{\text{MPNN}}(G, H) = ||h_G - h_H||_2$, but other distances between embeddings can also be used.
Note that $d_\text{MPNN}$ depends not only on the input graphs but also on the task on which the MPNN is trained. 
For example, $d_\text{MPNN}$ of an MPNN trained to predict the toxicity of molecules will be different from $d_\text{MPNN}$ of another MPNN trained to predict the solubility of the same molecules.

\textbf{Structural Pseudometrics}
To date, many different graph kernels have been proposed \citep[see][]{kriege2020survey}. 
Each positive semidefinite graph kernel $k: \cG \times \cG \to \reals$ corresponds to a pseudometric between graphs.
Please see Appendix~\ref{app:theory:struc} for the definitions of structural pseudometrics from previous studies that are used in this article.
We will refer to these pseudometrics as \emph{structural} pseudometrics and write $d_\text{struc}$, as they only consider the structural and node/edge attribute information of graphs, without being trained using the target label information.

\textbf{Functional Pseudometrics}
To formally define the functional distance between graphs, we introduce another pseudometric on $\cG$ that is based on the target labels of the graphs. 
\begin{restatable}[Functional Pseudometric]{definition}{defdfunc}
Let $y_G$ be the target label of graph $G$ in a given task. 
In classification, $y_G$ is a categorical class, while $y_G$ is a numerical value in regression.
We assume the space for $y_G$ is bounded.
Then, the functional pseudometric space $(\cG, d_\text{func})$ is obtained from $d_\text{func}: \cG \times \cG \to [0, 1]$ defined as:
\begin{equation*}
    d_\text{func}(G, H) \coloneqq 
    \begin{cases}
        \mathbbm{1}_{y_G \neq y_H} & (\text{classification})\\
        \frac{|y_G - y_H|}{\underset{I \in \cG}{\sup} y_I - \underset{I \in \cG}{\inf} y_I} & (\text{regression}),
    \end{cases}
\end{equation*}
where $\mathbbm{1}_{y_G \neq y_H}$ is the indicator function that returns 1 if $y_G \neq y_H$, otherwise 0.
\end{restatable}
See Appendix~\ref{app:theory:func} for a proof that $(\cG, d_\text{func})$ is a pseudometric space.
If the sup/inf of $y_G$ in $\cG$ are unknown, they can be approximated by the max/min in a training dataset. %

\textbf{The Expressive Power} of a message passing algorithm is defined based on its ability to distinguish non-isomorphic graphs. 
Formally, a message passing graph embedding function $f$ is said to be at least as expressive as another one $g$ if the following holds:
\begin{equation*}
    \forall G, H \in \cG: f(G) = f(H) \implies g(G) = g(H),
\end{equation*}
where $\cG$ is the set of all pairwise non-isomorphic graphs. 
We extend the above to pseudometrics on graphs.
Specifically, a graph pseudometric $d$ is said to be at least as expressive as $d'$ $(d \ge d')$ iff 
\begin{equation*}
    \forall G, H \in \cG: d(G, H) = 0 \implies d'(G, H) = 0.
\end{equation*}
$d$ and $d'$ are equally expressive ($d \cong d'$) iff $d \ge d'$ and $d' \ge d$.
Furthermore, $d$ is said to be more expressive than $d'$ $(d > d')$ iff $d \ge d'$ and there exists $G, H \in \cG$ s.t. $d(G, H) \neq 0 \land d'(G, H) = 0$.

\section{Is the MPNN Embedding Distance Critical to Performance?}
\label{sec:dfunc}

Our first question is what properties $d_\text{MPNN}$ of well performing MPNNs have in practice that can explain their high performance.
This section investigates whether the alignment between $d_\text{MPNN}$ and the \emph{task-relevant} pseudometric $d_\text{func}$ is such a property.
Specifically, we address the questions below:
\begin{description}
    \item[Q1.1] Does training an MPNN increase the alignment between $d_\text{MPNN}$ and the task-relevant $d_\text{func}$?
    \item[Q1.2] Does a strong alignment between $d_\text{MPNN}$ and $d_\text{func}$ indicate high performance of the MPNN?
\end{description}
Note that the alignment between $d_\text{MPNN}$ and \emph{task-irrelevant} structural graph pseudometrics $d_\text{struc}$ has been considered a key to MPNN performance in previous studies \citep{tmd,boker2024fine,wlmargin2024}.
However, we found that this property is not consistently improved by training and does not correlate with performance. (See Appendix~\ref{app:exp_struc} for detailed analyses).

To answer \textbf{Q1.1} and \textbf{Q1.2}, we will first define a measure of the alignment between $d_\text{MPNN}$ and $d_\text{func}$.
Note that it is inappropriate to adopt a typical min/max of $\frac{d_\text{func}(G, H)}{d_\text{MPNN}(G, H)}$ to measure the alignment. 
This is because $d_\text{func}$ is a binary function for classification tasks, and expecting the exact match of the two distances is unreasonable.
Thus, we define our evaluation criterion as follows.

\begin{restatable}[Evaluation Criterion for Alignment Between $d_\text{MPNN}$ and $d_\text{func}$]{definition}{defali}
\label{def:ali}
Let $\cD$ be a graph dataset, $k(\ge1)$ be an integer hyperparameter, and $\cN_k(G) \subset \cD \setminus \{G\}$ be a set of $k$ graphs that are closest to $G$ under $d_\text{MPNN}$. Let
\begin{align*}
    A_k(G) &\coloneqq \frac{1}{k} \sum\limits_{H \in \cN_k(G)} d_\text{func}(G, H),\\
    B_k(G) &\coloneqq \frac{1}{|\cD|-k-1} \sum\limits_{H \in \cD \setminus (\cN_k(G)\cup{\{G\}})} d_\text{func}(G, H).
\end{align*}
Then, $d_\text{MPNN}$ is \emph{aligned} with $d_\text{func}$ if
\begin{equation*}
    \text{ALI}_k(d_\text{MPNN}, d_\text{func}) \coloneqq \frac{1}{|\cD|}\sum\limits_{G \in \cD} \left[-A_k(G) + B_k(G) \right]
\end{equation*}
is positive. In addition, the larger $\text{ALI}_k$ is, the more we say $d_\text{MPNN}$ is aligned with $d_\text{func}$.
\end{restatable}

Here, $A_k(G)$ and $B_k(G)$ are the average functional distances between $G$ and its neighbors and non-neighbors, respectively.
If $A_k(G) < B_k(G)$, then $G$ and closeby graphs in MPNN space have a lower average functional distance than $G$ and far away graphs in MPNN space. 

We show the distribution of $\text{ALI}_k(d_\text{MPNN}, d_\text{func})$ for 48 different MPNNs on different datasets and varying $k$ in Figure~\ref{fig:catplot}. 
Each model was trained with a standard loss function (cross entropy loss for classification and RMSE for regression). We did not explicitly optimize $\text{ALI}_k$. 
We also include the results for untrained MPNNs to see the effect of training.
We can see that there is little overlap between the distributions of the untrained and trained MPNNs.
This means that $\text{ALI}_k$ consistently improves through training, implying a positive answer to \textbf{Q1.1}.
Next, we compute Spearman's rank correlation coefficient (SRC) between $\text{ALI}_k(d_\text{MPNN}, d_\text{func})$ of trained MPNNs and their predictive performance.
We use accuracy and RMSE between the ground truth target and predicted values to measure classification and regression performance, respectively.
Table~\ref{tab:corr_mk_performance} shows that SRC for Mutagenicity and ENZYMES is always positive, indicating that the higher the $\text{ALI}_k$, the higher the accuracy. 
Similarly, the higher the $\text{ALI}_k$, the lower the RMSE for Lipophilicity. 
The correlations are consistent across training and test sets.
These results suggest that the degree of alignment between $d_{\text{MPNN}}$ and $d_\text{func}$ is a crucial factor contributing to the high performance of MPNNs, answering \textbf{Q1.2} positively. 
See Appendix~\ref{app:exp_func} for more details and additional results on non-molecular datasets.

\begin{figure}[t]
    \centering
    \includegraphics[width=\columnwidth]{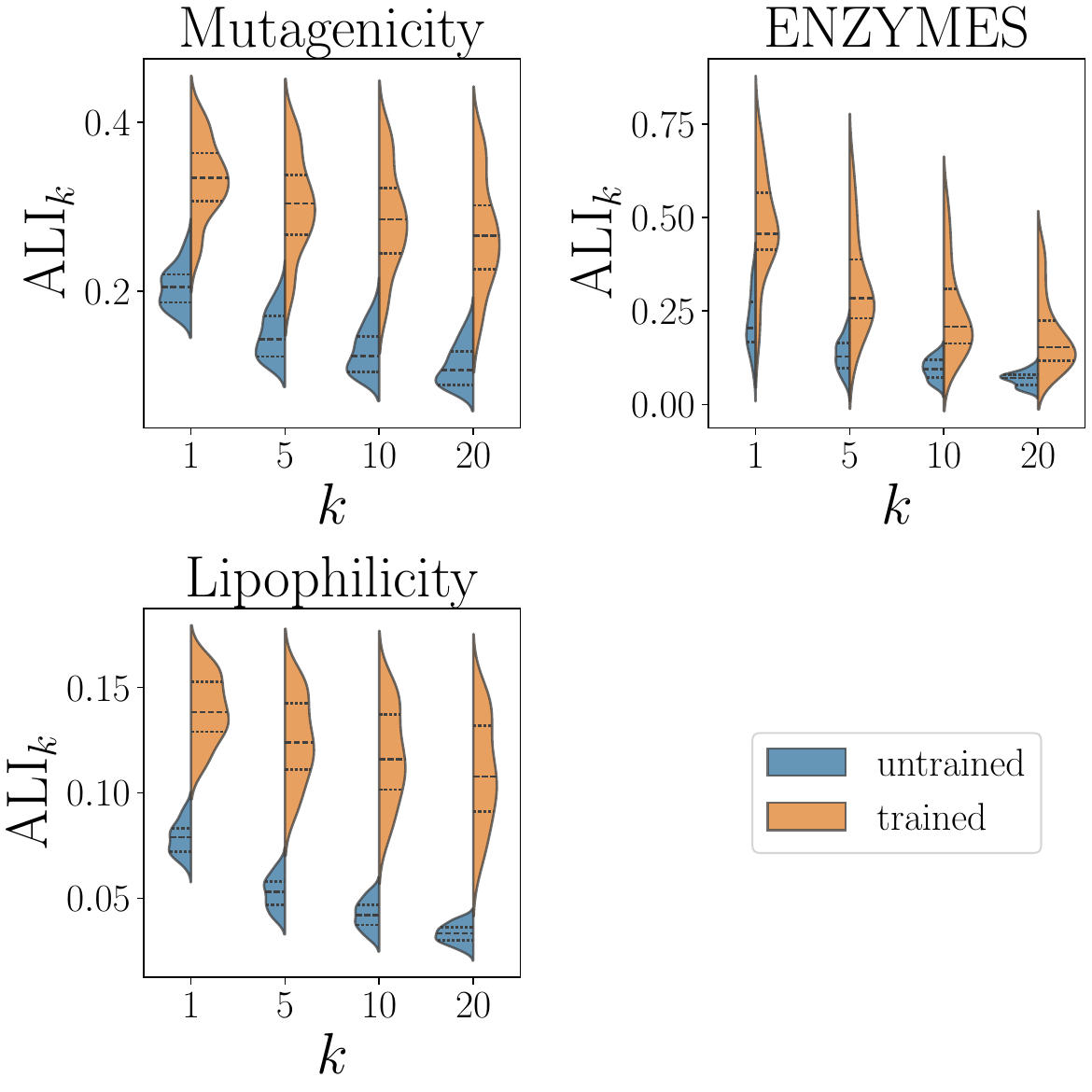}
    \caption{The distribution of $\text{ALI}_k(d_\text{MPNN}, d_\text{func})$ under different $k$ and datasets.\label{fig:catplot}}
\end{figure}

\begin{table}[t]
    \centering
    \caption{Correlation (SRC) between $\text{ALI}_k(d_\text{MPNN}, d_\text{func})$ and the performance on $\cD_\text{train}$ and $\cD_\text{test}$ under different $k$. We use accuracy for Mutagenicity and ENZYMES, and RMSE for Lipophilicity to measure performance.}
    \label{tab:corr_mk_performance}
    \begin{tabular}{ccccccc}
        \toprule
        & \multicolumn{2}{c}{Mutagenicity} & \multicolumn{2}{c}{ENZYMES} & \multicolumn{2}{c}{Lipophilicity}\\
        \cmidrule(r){2-3}\cmidrule(r){4-5}\cmidrule(r){6-7}
        k & train & test & train & test & train & test \\
        \midrule
        1 &  0.66 & 0.70 & 0.89 & 0.49 & -0.65 & -0.57 \\ 
        5 &  0.65 & 0.69 & 0.87 & 0.50 & -0.63 & -0.56 \\ 
        10 & 0.63 & 0.68 & 0.87 & 0.47 & -0.62 & -0.56 \\ 
        20 & 0.61 & 0.67 & 0.85 & 0.46 & -0.60 & -0.55 \\
        \bottomrule
    \end{tabular}
\end{table}

\section{WILTing Pseudometrics}
Section~\ref{sec:dfunc} confirms that MPNNs are implicitly trained so that $d_\text{MPNN}$ aligns with $d_\text{func}$, which turns out to be crucial for MPNN's performance. 
Then, our second research question is: how do MPNNs learn $d_\text{MPNN}$ that respects $d_\text{func}$?
To this end, we note that MPNN embeddings are aggregations of WL color embeddings, i.e., there exists a function $f$ with $h_G = f(\{\!\!\{c_v \mid v \in V_G\}\!\!\})$.
Hence, $d_\text{MPNN}$ defines a pseudometric $d_f$ between multisets of WL colors 
\begin{eqnarray*}
    &&d_\text{MPNN}(G, H) := ||h_G - h_H||_2 \\
    &&\quad= ||f(\{\!\!\{c_v^{(L)} \mid v \in V_G\}\!\!\}) - f(\{\!\!\{c_v^{(L)} \mid v \in V_H\}\!\!\})||_2 \\
    &&\quad=: d_f(\{\!\!\{c_v^{(L)} \mid v \in V_G\}\!\!\}, \{\!\!\{c_v^{(L)} \mid v \in V_H\}\!\!\}).
\end{eqnarray*}
Therefore, we aim to understand $d_\text{MPNN}$ by distilling it into our more interpretable and equally expressive pseudometric $d_\text{WILT}$ on the same multisets.
$d_\text{WILT}$ is an optimal transport distance on the weighted Weisfeiler Leman Labeling Tree (WILT) and generalizes two existing pseudometrics of high-performing graph kernels \citep{kriege2016wloa,wassersteinwl}.
After distillation, the edge parameters of $d_\text{WILT}$ allow us to identify WL colors whose presence or absence significantly affects the relative position of graphs in the MPNN embedding space.
In addition, $d_\text{WILT}$ is efficient to compute as the ground metric is a path length on the tree.

\subsection{Weisfeiler Leman Labeling Tree (WILT)}
\label{sec:wilt:wilt}
The Weisfeiler Leman Labeling Tree (WILT) $T_\cD$ is a rooted weighted tree built from the set of colors obtained by the WL test on a graph dataset $\cD \subseteq \cG$.
Given $\cD$, we define $V(T_\cD)$ as the \emph{set} of colors that appear on any node during the WL test plus the root node $r$, that is, $V(T_\cD) = \{c_v^{(l)} \mid v \in V_G, G \in \cD, l \in [L] \} \cup \{r\}$.
Colors $x,y\in V(T_\cD)\setminus \{r\}$ are adjacent if and only if there exists a node $v$ in some graph in $\cD$ and an iteration $l$ with $x = c_v^{(l)}$ and $y = c_v^{(l-1)}$.
$r$ is connected to all $x = c_v^{(0)}$.
Due to the injectivity of the $\text{AGG}$ and $\text{UPD}$ functions in the WL algorithm, it follows that $T_\cD$ is a tree. 
Figure~\ref{fig:wilt} (top) shows the WILT built from the graphs $G$ and $H$ in Figure~\ref{fig:wltest}. 
See Appendix~\ref{app:algo} for a detailed algorithm to build a WILT from $\cD$.

We consider edge weights $w: E(T_\cD) \to \reals_{\geq 0}$ on WILT.
We only allow non-negative weights so that the WILTing distance in Definition~\ref{def:dwilt} will be non-negative.
Given a WILT $T_\cD$ with weights $w$, the shortest path length $d_\text{path}(x,y; w) := \sum_{e \in \text{Path}(x,y)} w(e)$ is the sum of edge weights of the unique shortest path $\text{Path}(x,y)$ between $x$ and $y$.
Note that $d_\text{path}$ is a pseudometric on $V(T_\cD)$, i.e., the set of WL colors in $\cD$.
Intuitively, $d_\text{path}(x,y; w)$ is large if $\text{Path}(x,y)$ is long, but $w$ allows us to tune this pseudometric according to the needs of the learning task.

\subsection{The WILTing Distance}
\label{sec:wilt:dwilt}
A WILT $T_\cD$ with edge weights $w$ yields a pseudometric $d_{\text{WILT}}$ on the graph set $\cD$. 
This section shows two equivalent characterizations of $d_{\text{WILT}}$ as an optimal transport distance and as a weighted Manhattan distance.
The latter allows us to define the importance of specific WL colors and to compute our proposed pseudometric efficiently.
For simplicity, we define $d_{\text{WILT}}$ for graphs with the same number of nodes. 
In the next section, we will discuss the extension to graphs with different numbers of nodes.
For two distributions with identical mass on the same pseudometric space, optimal transport distances such as the Wasserstein distance \citep{villani2009optimal} measure the minimum effort of shifting probability mass from one distribution to the other. 
Each unit of shifted mass is weighted by the distance it is shifted. 
We define our pseudometric $d_\text{WILT}(G, H; w)$ as the optimal transport between $V_G$ and $V_H$, where the ground pseudometric is the shortest path metric on the WILT $T_\cD$.

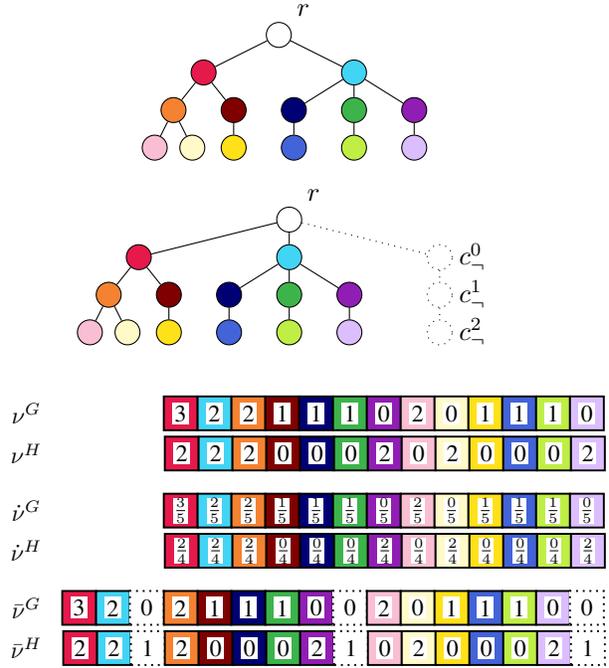
\begin{figure}[t]
    \centering
    \begin{tikzpicture}

\begin{scope}[scale=0.9, every node/.append style={transform shape}]
\node (nuG) at (-2.7, -5.0) {$\nu^G$};
\node[below=0.05cm of nuG] (nuH) {$\nu^H$};
\node[below=0.3cm of nuH] (dotG) {$\dot{\nu}^G$};
\node[below=0.05cm of dotG] (dotH) {$\dot{\nu}^H$};
\node[below=0.3cm of dotH] (barG) {$\bar{\nu}^{G}$};
\node[below=0.05cm of barG] (barH) {$\bar{\nu}^{H}$};

\node[right=1.55cm of nuG] (GG) {
\begin{tikzpicture}[anchor=center]
\foreach \xx/\i/\v/\l in {1/1/3/black, 2/2/2/black, 3/3/2/black, 4/4/1/black, 5/5/1/black, 6/6/1/black, 7/7/0/black, 8/8/2/black, 9/9/0/black, 10/10/1/black, 11/11/1/black, 12/12/1/black, 13/13/0/black} {
    \fill[color\i] (\xx * 0.5, 0) rectangle ++(0.5, 0.5);
    \draw[thick, \l] (\xx * 0.5, 0) rectangle ++(0.5, 0.5);
    \node[fill=white, inner sep=1pt] (H\xx) at (\xx * 0.5 + 0.25, 0.25) {\v};
    \pgfmathsetmacro{\xx}{\xx + 1}
};
\end{tikzpicture}
};

\node[right=1.55cm of nuH] {
\begin{tikzpicture}[anchor=center]
\foreach \xx/\i/\v/\l in {1/1/2/black, 2/2/2/black, 3/3/2/black, 4/4/0/black, 5/5/0/black, 6/6/0/black, 7/7/2/black, 8/8/0/black, 9/9/2/black, 10/10/0/black, 11/11/0/black, 12/12/0/black, 13/13/2/black} {
    \fill[color\i] (\xx * 0.5, 0) rectangle ++(0.5, 0.5);
    \draw[thick, \l] (\xx * 0.5, 0) rectangle ++(0.5, 0.5);
    \node[fill=white, inner sep=1pt] (H\xx) at (\xx * 0.5 + 0.25, 0.25) {\v};
};
\end{tikzpicture}
};

\node[right=1.55cm of dotG] {
\begin{tikzpicture}[anchor=center]
\foreach \xx/\i/\v/\l in {1/1/3/black, 2/2/2/black, 3/3/2/black, 4/4/1/black, 5/5/1/black, 6/6/1/black, 7/7/0/black, 8/8/2/black, 9/9/0/black, 10/10/1/black, 11/11/1/black, 12/12/1/black, 13/13/0/black} {
    \fill[color\i] (\xx * 0.5, 0) rectangle ++(0.5, 0.5);
    \draw[thick, \l] (\xx * 0.5, 0) rectangle ++(0.5, 0.5);
    \node[fill=white, inner sep=0pt,font=\small] (H\xx) at (\xx * 0.5 + 0.25, 0.25) {$\frac{\v}{5}$};
    \pgfmathsetmacro{\xx}{\xx + 1}
};
\end{tikzpicture}
};

\node[right=1.55cm of dotH] {
\begin{tikzpicture}[anchor=center]
\foreach \xx/\i/\v/\l in {1/1/2/black, 2/2/2/black, 3/3/2/black, 4/4/0/black, 5/5/0/black, 6/6/0/black, 7/7/2/black, 8/8/0/black, 9/9/2/black, 10/10/0/black, 11/11/0/black, 12/12/0/black, 13/13/2/black} {
    \fill[color\i] (\xx * 0.5, 0) rectangle ++(0.5, 0.5);
    \draw[thick, \l] (\xx * 0.5, 0) rectangle ++(0.5, 0.5);
    \node[fill=white, inner sep=0pt,font=\small] (H\xx) at (\xx * 0.5 + 0.25, 0.25) {$\frac{\v}{4}$};
    \pgfmathsetmacro{\xx}{\xx + 1}
};
\end{tikzpicture}
};

\node[right=0.05cm of barG] {
\begin{tikzpicture}[anchor=center]
\foreach \xx/\i/\v/\l in {1/1/3/black, 2/2/2/black, 3/0/0/dotted, 4/3/2/black, 5/4/1/black, 6/5/1/black, 7/6/1/black, 8/7/0/black, 9/0/0/dotted, 10/8/2/black, 11/9/0/black, 12/10/1/black, 13/11/1/black, 14/12/1/black, 15/13/0/black, 16/0/0/dotted} {
    \fill[color\i] (\xx * 0.5, 0) rectangle ++(0.5, 0.5);
    \draw[thick, \l] (\xx * 0.5, 0) rectangle ++(0.5, 0.5);
    \node[fill=white, inner sep=1pt] (H\xx) at (\xx * 0.5 + 0.25, 0.25) {\v};
    \pgfmathsetmacro{\xx}{\xx + 1}
};
\end{tikzpicture}
};

\node[right=0.05cm of barH] {
\begin{tikzpicture}[anchor=center]
\pgfmathsetmacro{\xx}{0}
\foreach \xx/\i/\v/\l in {1/1/2/black, 2/2/2/black, 3/0/1/dotted, 4/3/2/black, 5/4/0/black, 6/5/0/black, 7/6/0/black, 8/7/2/black, 9/0/1/dotted, 10/8/0/black, 11/9/2/black, 12/10/0/black, 13/11/0/black, 14/12/0/black, 15/13/2/black, 16/0/1/dotted} {
    \fill[color\i] (\xx * 0.5, 0) rectangle ++(0.5, 0.5);
    \draw[thick, \l] (\xx * 0.5, 0) rectangle ++(0.5, 0.5);
    \node[fill=white, inner sep=1pt] (H\xx) at (\xx * 0.5 + 0.25, 0.25) {\v};
    \pgfmathsetmacro{\xx}{\xx + 1}
};
\end{tikzpicture}
};

\end{scope}

\node (wilt) at (1.0, 0) {
\begin{tikzpicture}[every node/.style={draw=black,circle,text=black},
                    level 1/.style={sibling distance=20mm, level distance=5mm},
                    level 2/.style={sibling distance=8mm, level distance=5mm},
                    level 3/.style={sibling distance=5mm}]
    \node[fill=color0] (wiltroot) {}
        child {
            node[fill=color1] (1) {} 
            child {
                node[fill=color3] (3) {}
                child {
                    node[fill=color8] (8) {}
                }
                child {
                    node[fill=color9] (9) {}
                }
            }
            child {
                node[fill=color4] (4) {}
                child {
                    node[fill=color10] (10) {}
                }
            }
        }
        child {
            node[fill=color2] (2) {}
            child {
                node[fill=color5] (5) {}   
                child {
                    node[fill=color11] (11) {}
                }
            }
            child {
                node[fill=color6] (6) {}
                child {
                    node[fill=color12] (12) {}
                }
            }
            child {
                node[fill=color7] (7) {}
                child {
                    node[fill=color13] (13) {}
                }
            }
        };

\node[anchor=south west, draw=none] at (wiltroot.north east) {$r$};

\end{tikzpicture}
};

\node (wiltdummy) at (1.0,-2.5) {
\begin{tikzpicture}[every node/.style={draw=black,circle,text=black},
                    level 1/.style={sibling distance=20mm, level distance=5mm},
                    level 2/.style={sibling distance=8mm, level distance=5mm},
                    level 3/.style={sibling distance=5mm}]
    \node[fill=color0] (wiltdummyroot) {}
        child {
            node[fill=color1] (1) {} 
            child {
                node[fill=color3] (3) {}
                child {
                    node[fill=color8] (8) {}
                }
                child {
                    node[fill=color9] (9) {}
                }
            }
            child {
                node[fill=color4] (4) {}
                child {
                    node[fill=color10] (10) {}
                }
            }
        }
        child {
            node[fill=color2] (2) {}
            child {
                node[fill=color5] (5) {}   
                child {
                    node[fill=color11] (11) {}
                }
            }
            child {
                node[fill=color6] (6) {}
                child {
                    node[fill=color12] (12) {}
                }
            }
            child {
                node[fill=color7] (7) {}
                child {
                    node[fill=color13] (13) {}
                }
            }
        }
        child[dotted] {
            node (virtual0) {}
            child {
                node (virtual1) {}
                child {
                    node (virtual2) {}
                }
            }
        };

\node[anchor=south west, draw=none] at (wiltdummyroot.north east) {$r$};
\node[anchor=west, draw=none, inner sep=0pt] at (virtual0.east) {$c_\neg^0$};
\node[anchor=west, draw=none, inner sep=0pt] at (virtual1.east) {$c_\neg^1$};
\node[anchor=west, draw=none, inner sep=0pt] at (virtual2.east) {$c_\neg^2$};

\end{tikzpicture}
};

\end{tikzpicture}
    \caption[]{(top): The Weisfeiler Leman Labelling Tree (WILT) built from $\cD = \{G, H\}$ from Figure~\ref{fig:wltest}. 
    (middle): The WILT built from $\cD = \{G, H\}$ with dummy nodes.
    (bottom): The WILT embeddings $\nu$, $\dot{\nu}$ with size normalization, and $\bar{\nu}$ with dummy node normalization. %
    }
    \label{fig:wilt}
\end{figure}

\begin{definition}[WILTing Distance]
\label{def:dwilt}
Let $G, H \in \cD$ be graphs with $|V_G| = |V_H|$.  
Then
\begin{equation*}
    d_\text{WILT}(G, H; w) \coloneqq \min_{P \in \Gamma} \sum_{v_i \in V_G}\sum_{u_j \in V_H} P_{i,j} d_{\text{path}}(c_{v_i}^{(L)}, c_{u_j}^{(L)}) 
\end{equation*}
where $\Gamma \coloneqq \{P \in \reals^{|V_G| \times |V_H|} \mid P_{i, j} \ge 0, P\bm{1} = \bm{1}, P^{T}\bm{1} = \bm{1}\}$.
\end{definition}

Note that $d_\text{WILT}$ is not a metric but a pseudometric on the set of pairwise nonisomorphic graphs $\cG$.
This is because there are nonisomorphic graphs $G$ and $H$ whose colors are the same after $L$ iterations, i.e., $\{\!\!\{c_v^{(L)} \mid v \in V_G\}\!\!\} = \{\!\!\{c_v^{(L)} \mid v \in V_H\}\!\!\}$.

Generic algorithms to compute Wasserstein distances require cubic runtime. 
In our case, however, there exists a \emph{linear} time algorithm to compute $d_{\text{WILT}}$ as shown below, since the ground pseudometric $d_\text{path}$ is the shortest path metric on a tree \citep{le2019tree}.

\begin{definition}[WILT Embedding] 
    The WILT embedding of a graph $G \in \cD$ is a vector, where each dimension counts how many times a corresponding WL color appears during the WL test on $G$, i.e., $\nu^G_c := |\{ v \in V_G \mid \exists l \in [L] \, c_v^{(l)} = c \}|$ for $c \in V(T_\cD) \setminus \{ r \}$. (see the bottom of Figure~\ref{fig:wilt}).
\end{definition}

\begin{proposition}[Equivalent Definition of WILTing Distance]
\label{prop:fast_wilting}
$d_\text{WILT}$ in Definition~\ref{def:dwilt} is equivalent to:
\begin{equation*}
    d_\text{WILT}(G, H; w) = \sum_{c \in V(T_\cD) \setminus \{r\}} w\left(e_{\{c, p(c)\}}\right) \left| \nu_c^G - \nu_c^H \right|,
\end{equation*}
where $e_{\{c, p(c)\}}$ is the edge connecting $c$ and its parent $p(c)$ in $T_\cD$.
\end{proposition}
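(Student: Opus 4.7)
The plan is to cast the optimal transport in Definition~\ref{def:dwilt} as a standard $1$-Wasserstein distance on the tree $T_\cD$ and then apply the known closed-form formula for tree Wasserstein distances. First, I would place unit mass on the WILT at each vertex $c_v^{(L)}$ for every $v \in V_G$, obtaining a measure $\mu^G$ supported on the depth-$L$ vertices of $T_\cD$, and similarly $\mu^H$ for $H$. Since $|V_G| = |V_H|$, the two measures have equal total mass, and the set $\Gamma$ in Definition~\ref{def:dwilt} is precisely the set of transport plans between $\mu^G$ and $\mu^H$ with uniform per-node weights. The cost $d_\text{path}$ is the ground pseudometric on $T_\cD$, so $d_\text{WILT}(G, H; w) = W_1(\mu^G, \mu^H)$ with respect to $d_\text{path}$.

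Next, I would invoke the classical tree-Wasserstein identity (already cited in the paragraph preceding the proposition as \citet{le2019tree}): for any rooted weighted tree $(T,r,w)$ and any two measures $\mu,\nu$ of equal total mass on $V(T)$,
\[
W_1(\mu, \nu) \;=\; \sum_{c \in V(T) \setminus \{r\}} w\!\left(e_{\{c, p(c)\}}\right)\, \bigl|\mu(T_c) - \nu(T_c)\bigr|,
\]
where $T_c$ denotes the subtree rooted at $c$. A one-line justification is that every unit of mass transported between $x$ and $y$ must traverse each edge on the unique $x$–$y$ path in the tree, so each edge $e$ separating subtree $T_c$ from the rest contributes a net flow of at least $|\mu(T_c) - \nu(T_c)|$; this lower bound is achievable by routing mass greedily along the tree.

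All that remains is to verify $\mu^G(T_c) = \nu_c^G$ for every non-root color $c$. By construction, $\mu^G(T_c)$ counts the $v \in V_G$ whose final color $c_v^{(L)}$ is a descendant of $c$ in $T_\cD$. Because the parent of $c_v^{(l)}$ in $T_\cD$ is $c_v^{(l-1)}$, the unique root-to-leaf path ending at $c_v^{(L)}$ passes through exactly the colors $c_v^{(0)}, c_v^{(1)}, \dots, c_v^{(L)}$, and since colors from different WL iterations live at different depths of $T_\cD$, for each $v$ there is at most one $l$ with $c_v^{(l)} = c$. Hence $c$ is an ancestor of $c_v^{(L)}$ if and only if there exists $l \in [L]$ with $c_v^{(l)} = c$, giving $\mu^G(T_c) = |\{v \in V_G \mid \exists l \in [L]\colon c_v^{(l)} = c\}| = \nu_c^G$. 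Substituting into the tree-Wasserstein formula yields the claim. The main obstacle is the tree-Wasserstein identity itself, but since that is a standard result, the bulk of the remaining work is the bookkeeping in this last step: checking that the WILT embedding coordinates $\nu_c^G$ correctly recover the subtree masses, which in turn relies on the rooted-tree structure of $T_\cD$ established in Section~\ref{sec:wilt:wilt}.
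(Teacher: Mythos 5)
Your proof is correct and follows essentially the same route the paper intends: the main text attributes the linear-time computability to the fact that $d_\text{path}$ is a tree metric and cites \citet{le2019tree}, and the appendix (for the normalized variant) again defers to that reference without spelling out details. You fill in exactly the bookkeeping the paper leaves implicit, namely that the subtree mass $\mu^G(T_c)$ equals the WILT-embedding coordinate $\nu_c^G$, which hinges on the rooted-tree structure of $T_\cD$ (each color sits at a unique depth, so a color $c$ is an ancestor of $c_v^{(L)}$ exactly when $c = c_v^{(l)}$ for the appropriate $l$). This is the right verification and matches the paper's construction in Section~\ref{sec:wilt:wilt}.
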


This equivalence allows efficient computation of $d_{\text{WILT}}$ given the WILT embeddings of graphs, which can be computed by the WL algorithm in $O(L|E_G|)$ time, where $L$ is the number of WL iterations.
Using sparse vectors for $\nu^G$ and $\nu^H$, $d_{\text{WILT}}(G,H)$ can be computed in $O(|V_G|+|V_H|)$.

\subsection{Normalization and Special Cases}

\label{sec:wilt:specialcases}
The definition of $d_\text{WILT}(G,H)$ as an optimal transport distance requires $|V_G|=|V_H|$.
However, $|V_G|$ and $|V_H|$ are usually different, so we propose two solutions.
Interestingly, the two modified WILTing distances generalize two pseudometrics corresponding to well-known graph kernels.

\textbf{Size Normalization}
Straightforwardly, we can restrict the mass of each node to $\frac{1}{|V_G|}$ when calculating the Wasserstein distance in Definition~\ref{def:dwilt}. 
In other words, we replace $\Gamma$ with $\dot{\Gamma} \coloneqq \{P \in \reals^{|V_G| \times |V_H|} \mid P_{i, j} \ge 0, P\bm{1} = \frac{1}{|V_G|}\bm{1}, P^{T}\bm{1} = \frac{1}{|V_H|}\bm{1}\}$.
Similarly, $\nu^G$ in Proposition~\ref{prop:fast_wilting} is changed to $\dot{\nu}^G := \frac{\nu^G}{\left| V_G \right|}$. 
The resulting pseudometric $\dot{d}_\text{WILT}$ effectively ignores differences in the number of nodes of $G$ and $H$, generally assigning fractions of colors in $G$ to colors in $H$.
In Figure~\ref{fig:wilt} (bottom), we show $\dot{\nu}$ of $G$ and $H$ from Figure~\ref{fig:wltest}.
$\dot{d}_\text{WILT}(G, H)$ is calculated as:
\begin{align*}
    \dot{d}_\text{WILT}(G, H) =& w(e_{\{\ccirc{color1}, \ccirc{color0}\}}) \left|\frac{3}{5}-\frac{2}{4}\right| + w(e_{\{\ccirc{color2}, \ccirc{color0}\}}) \left|\frac{2}{5}-\frac{2}{4}\right|\\
    &+ \ldots + w(e_{\{\ccirc{color13}, \ccirc{color7}\}}) \left|\frac{0}{5}-\frac{2}{4}\right|.
\end{align*}
An interesting property of $\dot{d}_\text{WILT}$ is that it generalizes the pseudometric corresponding to the Wasserstein Weisfeiler Leman graph kernel \citep{wassersteinwl}: 
when $w \equiv \frac{1}{2(L+1)}$, $\dot{d}_\text{WILT}$ matches their pseudometric.
See Appendix~\ref{app:theory:wilt} for technical details.

\textbf{Dummy Node Normalization}
We can also add isolated nodes with a special attribute, called dummy nodes, to graphs so that all the graphs have the same number of nodes.
The WILT will be built in the same way as described in Section~\ref{sec:wilt:wilt} after dummy nodes are added to all graphs in $\cD$.
The resulting WILT has new colors $c_\neg^{0}, c_\neg^{1}, \ldots, c_\neg^L$ that arise from the WL iteration on the isolated dummy nodes (Figure~\ref{fig:wilt} middle).
The WILT embedding will be slightly changed to
\[
\bar{\nu}^G_c := 
\begin{cases}
    N - |V_G| \text{ if } c \in \{c_\neg^0, c_\neg^1, \ldots,c_\neg^L\} \\
    \nu^G_c \text{ otherwise} \\
\end{cases} \ ,
\]
where $N = \max_{G\in\cD} |V_G|$ (Figure~\ref{fig:wilt} bottom). 
Then, the resulting pseudometric $\bar{d}_\text{WILT}(G, H)$ for the graphs in Figure~\ref{fig:wltest} is:
\begin{align*}
    \bar{d}_\text{WILT}(G, H) =& w(e_{\{\ccirc{color1}, \ccirc{color0}\}}) |3-2| + w(e_{\{\ccirc{color2}, \ccirc{color0}\}}) |2-2|\\
    &+ \ldots + w(e_{\{\dotcirc, \dotcirc\}}) |0-1|.
\end{align*}
Similar to size normalization, $\bar{d}_\text{WILT}$ includes the pseudometric of Weisfeiler Leman optimal assignment kernel \citep{kriege2016wloa} as a special case. 
When $w \equiv \frac{1}{2}$, $\bar{d}_\text{WILT}$ is equivalent to their pseudometric.
See Appendix~\ref{app:theory:wilt} for details.

\subsection{Edge Weight Learning and Identification of Important WL Colors}
\begin{algorithm}[t]
\caption{Optimizing edge weights of WILT}\label{alg:distill}
\begin{algorithmic}
\STATE {\bfseries Input:} graph dataset $\cD$, an MPNN $f$ with $L$ message passing layers trained on $\cD$, and WILT $T_\cD$ built from the results of $L$-iteration WL test on $\cD$
\STATE {\bfseries Parameter:} batch size, number of epochs $E$, and learning rate $lr$
\STATE {\bfseries Output:} learned edge weights $w$ of WILT $T_\cD$
\STATE $n_c \gets |E(T_\cD)|$ 
\STATE $w \gets \mathbbm{1} \in \bbR^{n_c}$
\STATE optimizer $\gets$ Adam(params=$w$, lr=$lr$)
\FOR{$e = 1$ {\bfseries to} $E$}
\FOR{batch $B$ {\bfseries in} $\cD^2$}
\STATE $l \gets \frac{1}{|B|}\underset{(G, H) \in B}{\sum} \Bigl( d_\text{WILT}(G, H) - d_\text{MPNN}(G, H) \Bigr) ^2$
\STATE $l$.backward()
\STATE optimizer.step()
\LINECOMMENT{Ensuring that edge weights $w$ are non-negative}
\STATE $w \gets \max(w, 0)$
\ENDFOR
\ENDFOR
\STATE {\bfseries return} $w$
\end{algorithmic}
\end{algorithm}
Now, we have a graph pseudometric on WILT defined for any pairs of graphs in $\cD$.
Next, we show how to optimize the edge weights $w$.
Proposition~\ref{prop:fast_wilting} allows us to learn the edge weights $w$, given training data. 
Specifically, given a target pseudometric $d_\text{target}$ we adapt $d_\text{WILT}$ by minimizing
\begin{equation*}
    \label{eq:loss}
    \cL(w) \coloneqq \sum_{(G, H) \in \cD^2} \Bigl( d_\text{WILT}(G, H; w) - d_\text{target}(G, H) \Bigr) ^2,
\end{equation*}
with respect to $w$.
Note that $d_\text{WILT}$ can refer to both $\dot{d}_\text{WILT}$ and $\bar{d}_\text{WILT}$.
In this work, we focus on $d_\text{target} = d_{\text{MPNN}}$.
That is, we train $d_\text{WILT}$ to mimic the distances between the graph embeddings of a given MPNN, as shown in Algorithm \ref{alg:distill}.
Once we have trained $w$ by minimizing $\cL$, we can gain insight into $d_\text{MPNN}$ via $d_\text{WILT}$.
WL colors with large edge weights are those whose presence or absence in a graph significantly affects $d_\text{MPNN}$ between the graph and other graphs.
Specifically, we can derive the following reasoning.
\begin{align*}
    &\text{Large difference between $G$ and $H$ in the number}\\
    &\text{or ratio of WL colors $c$ with a large $w(e_{\{c, p(c)\}})$}\\
    \implies &\text{Large } d_\text{WILT}(G,H) \quad (\because \text{Proposition~\ref{prop:fast_wilting}})\\
    \implies &\text{Large } d_\text{MPNN}(G,H) \quad (\because \text{$d_\text{WILT}$ approximates $d_\text{MPNN}$})\\
\end{align*}

\subsection{Expressiveness of Pseudometrics on WILT}
\label{sec:wilt:exp}
Here, we discuss which of the two normalizations is preferred for a given MPNN based on the expressive power.
Below are the relationships between the expressiveness of $d_\text{MPNN}$ and $d_\text{WILT}$.
\begin{restatable}[Expressive Power of the Pseudometrics on WILT]{theorem}{thmexp}
\label{thm:exp}
Let $d_\text{MPNN}^\text{mean}$ and $d_\text{MPNN}^\text{sum}$ be $d_\text{MPNN}$ of MPNNs with mean and sum graph poolings, respectively.
We also define a pseudometric based on the $L$-iteration WL test:
\begin{equation*}
    d_\text{WL}(G, H) \coloneqq \mathbbm{1}_{\{\!\!\{c_v^{(L)} \mid v \in V_G\}\!\!\} \neq \{\!\!\{c_v^{(L)} \mid v \in V_H\}\!\!\}}.
\end{equation*}
Then, for WILT with positive edge weights, the following relationships hold between the expressive power of each pseudometric.
\begin{align*}
    &\dot{d}_\text{WILT} < \bar{d}_\text{WILT} \cong d_\text{WL},\\
    &d_\text{MPNN}^\text{mean} \le \dot{d}_\text{WILT} (< \bar{d}_\text{WILT}),\\
    &d_\text{MPNN}^\text{sum} \le \bar{d}_\text{WILT}, \ d_\text{MPNN}^\text{sum} \bcancel{\le} \dot{d}_\text{WILT}.
\end{align*}
\end{restatable}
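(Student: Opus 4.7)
The backbone of the proof is the observation that with strictly positive edge weights, Proposition~\ref{prop:fast_wilting} turns a vanishing WILTing distance into coordinate-wise equality of the WILT embeddings: $\bar{d}_\text{WILT}(G,H) = 0 \iff \bar{\nu}^G = \bar{\nu}^H$ and $\dot{d}_\text{WILT}(G,H) = 0 \iff \dot{\nu}^G = \dot{\nu}^H$. A second ingredient is the classical MPNN--WL refinement bound \citep{morris2019weisfeiler,gin}: any two nodes sharing an iteration-$L$ WL color receive the same $L$-layer MPNN node embedding. Combined with sum and mean pooling, this gives $h_G^\text{sum} = \sum_c \nu_c^G \, h(c)$ and $h_G^\text{mean} = \sum_c \dot{\nu}_c^G \, h(c)$, where $c$ ranges over iteration-$L$ colors and $h(c)$ is the per-color embedding. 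The plan is to reduce every expressivity claim in the theorem to an implication between such vector equalities.

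\textbf{Main inclusions.} For $\bar{d}_\text{WILT} \cong d_\text{WL}$ I argue both directions. If $\bar{\nu}^G = \bar{\nu}^H$, the dummy coordinate $\bar{\nu}^G_{c_\neg^l} = N - |V_G|$ forces $|V_G|=|V_H|$, and then $\nu^G = \nu^H$ everywhere; restricting to iteration $L$ yields $d_\text{WL}(G,H) = 0$. Conversely, equal iteration-$L$ color multisets propagate down the WILT because each parent count equals the sum of its children's counts, a direct consequence of the injectivity of the WL update; combined with $|V_G| = |V_H|$ (which follows from equal multiset sizes), this recovers $\bar{\nu}^G = \bar{\nu}^H$. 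The same reasoning gives $\bar{d}_\text{WILT} \ge \dot{d}_\text{WILT}$. For the MPNN inclusions, $\dot{\nu}^G = \dot{\nu}^H$ forces equality of the per-color coefficients in $h_G^\text{mean}$, so $d_\text{MPNN}^\text{mean} \le \dot{d}_\text{WILT}$; analogously $\bar{\nu}^G = \bar{\nu}^H$ forces equality of $\nu^G$ and $\nu^H$ at level $L$, giving $d_\text{MPNN}^\text{sum} \le \bar{d}_\text{WILT}$.

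\textbf{Strictness and non-inclusion.} A single witness handles both the strict inequality $\bar{d}_\text{WILT} > \dot{d}_\text{WILT}$ and the non-inclusion $d_\text{MPNN}^\text{sum} \not\le \dot{d}_\text{WILT}$: take $H := G \sqcup G$ for a graph $G$ whose chosen MPNN produces $h_G^\text{sum} \neq 0$ (easy to arrange, e.g.\ with an MPNN whose initial layer emits a constant non-zero vector). Disjoint copies introduce no new WL colors, so $\nu^H = 2\nu^G$ and $|V_H| = 2|V_G|$; hence $\dot{\nu}^H = \dot{\nu}^G$ (so $\dot{d}_\text{WILT}(G,H) = 0$), but $\bar{\nu}^G$ and $\bar{\nu}^H$ disagree on the dummy coordinates and $h_H^\text{sum} = 2 h_G^\text{sum} \neq h_G^\text{sum}$. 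The only delicate point throughout is justifying the per-color map $h(c)$, which is exactly the MPNN--WL refinement lemma; the remainder is bookkeeping with the WILT's tree structure and its $N-|V_G|$ dummy coordinates, and this bookkeeping is precisely what makes $\bar{d}_\text{WILT}$ strictly finer than $\dot{d}_\text{WILT}$ while remaining equivalent to $d_\text{WL}$.
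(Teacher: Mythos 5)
Your argument is correct and follows essentially the same route as the paper's: every expressivity claim is reduced to a chain of implications starting from the coordinate-wise equality of WILT embeddings forced by positive edge weights via Proposition~\ref{prop:fast_wilting}, and then to equality of graph embeddings via the WL-refinement fact that nodes sharing an iteration-$L$ color receive identical MPNN node embeddings. The only small departure is your reuse of the single witness $H = G \sqcup G$ (with $\sum_{v \in V_G} h_v^{(L)} \neq 0$) for both the strict separation $\dot{d}_\text{WILT} < \bar{d}_\text{WILT}$ and the non-inclusion $d_\text{MPNN}^\text{sum} \bcancel{\le} \dot{d}_\text{WILT}$, whereas the paper exhibits $k$-regular graphs of different sizes for the former; both choices work, and your explicit parent-equals-sum-of-children propagation argument makes precise a step the paper's proof leaves implicit when equating $\bar{\nu}^G = \bar{\nu}^H$ with equality of the leaf-color counts.
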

\begin{proof}
    See Appendix~\ref{app:theory:exp}.
\end{proof}
Since $\bar{d}_\text{WILT}$ is more expressive than $\dot{d}_\text{WILT}$, one might think that $\bar{d}_\text{WILT}$ is always preferable to approximating $d_\text{MPNN}$.
However, $\dot{d}_\text{WILT}$ is expected to be better at approximating $d_\text{MPNN}^\text{mean}$, since it provides a tighter bound. 
Intuitively, this follows from the fact that mean pooling and the size normalization are essentially the same procedure: they both ignore the number of nodes.
In contrast, $\bar{d}_\text{WILT}$ is expected to work well on $d_\text{MPNN}^\text{sum}$, which retains the information about the number of nodes and thus cannot be bounded by $\dot{d}_\text{WILT}$.
We will experimentally confirm these analyses in Section~\ref{sec:exp}.
Note that Theorem~\ref{thm:exp} considers only the binary expressiveness of pseudometrics.
Regarding the size of the family of functions that each pseudometric can represent, $d_\text{MPNN}$ is expected to be superior to $d_\text{WILT}$, because $d_\text{WILT}$ is restricted to an optimal transport on the tree for faster computation and better interpretability.
Still, in Section~\ref{sec:exp}, we empirically show that $d_\text{WILT}$ can approximate $d_\text{MPNN}$ well.

\section{Experiments}
\label{sec:exp}
\begin{table}[t]
    \centering
        \caption{The mean$\pm$std of $\text{RMSE}(d_\text{MPNN}, d)$ [$\times 10^{-2}$] over five different seeds. Each row corresponds to a GCN with a given graph pooling method, trained on a given dataset.\label{tab:rmse_wwl_wloa_wilt_gcn}}
    \scalebox{0.87}{
    \begin{tabular}{lrrrr}
        \toprule
        & $d_\text{WWL}$ & $d_\text{WLOA}$ & $\dot{d}_\text{WILT}$ & $\bar{d}_\text{WILT}$ \\
        \midrule
        \multicolumn{2}{l}{Mutagenicity} \\
        mean & 9.25$\pm$0.87 & 18.74$\pm$3.36 & \underline{1.74$\pm$0.52} & 3.34$\pm$1.01 \\ 
        sum & 12.25$\pm$0.54 & 5.98$\pm$1.60  & 1.22$\pm$0.31 & \underline{0.82$\pm$0.17} \\ 

        \multicolumn{2}{l}{ENZYMES} \\
        mean & 12.18$\pm$0.23 & 16.79$\pm$2.33 & \underline{2.71$\pm$0.38} & 4.64$\pm$0.67 \\ 
        sum & 11.28$\pm$0.65  & 6.83$\pm$0.41  & 9.15$\pm$0.47 & \underline{1.43$\pm$0.10} \\

        \multicolumn{2}{l}{Lipophilicity} \\
        mean & 10.92$\pm$0.42 & 13.97$\pm$0.97 & \underline{3.11$\pm$0.54} & 6.35$\pm$1.22 \\ 
        sum & 10.83$\pm$0.73  & 10.00$\pm$1.34 & \underline{2.50$\pm$0.67} & 2.64$\pm$0.74 \\
        \bottomrule
    \end{tabular}
    }
\end{table}
In this section, we confirm that our proposed $d_\text{WILT}$ can successfully approximate $d_\text{MPNN}$. 
Then, we show that the distribution of learned edge weights of WILT is skewed towards 0, and a large part of them can be removed with L1 regularization. 
Finally, we investigate the WL colors that influence $d_\text{MPNN}$ most. 
Due to space limitations, we report results only for a selection of MPNNs and datasets.
Code is available \href{https://github.com/masahiro-negishi/wilt}{online}, and experimental settings and additional results are in Appendix~\ref{app:exp_wilt}.

We trained 3-layer GCNs with mean or sum pooling on the three datasets with five different seeds.
We then distilled each into two WILTs, one with size normalization and one with dummy node normalization.
To evaluate how well a distance $d$ approximates $d_\text{MPNN}$, we used a variant of RMSE:
\begin{align*}
    &\text{RMSE}(d_\text{MPNN}, d)\\
    \coloneqq& \sqrt{\min_{\alpha \in \bbR} \frac{1}{|\cD|^2} \sum_{(G, H) \in \cD^2} \left(\hat{d}_\text{MPNN}(G, H) - \alpha \cdot \hat{d}(G, H)\right)^2},
\end{align*}
where $\hat{d}_\text{WILT}$ and $\hat{d}$ means they are normalized to $[0, 1]$.
Intuitively, the closer the RMSE is to zero, the better the alignment is, and zero RMSE means perfect alignment. 
We do not use the correlation coefficient because it can be one even if $d_\text{MPNN}$ is not a constant multiple of $d$: it allows a non-zero intercept.
Note that the minimization over $\alpha$ can be solved analytically.
In practice, we compute the RMSE using only 1000 pairs from $\cD^2$ for speed.
Table~\ref{tab:rmse_wwl_wloa_wilt_gcn} shows the RMSE between $d_\text{MPNN}$ and $\dot{d}_\text{WILT}$ or $\bar{d}_\text{WILT}$.
We also include results for $d_\text{WWL}$ and $d_\text{WLOA}$, which are special cases of $\dot{d}_\text{WILT}$ and $\bar{d}_\text{WILT}$ with fixed edge weights, respectively.
It is obvious that $d_\text{WILT}$ aligns with $d_\text{MPNN}$ much better than $d_\text{WWL}$ and $d_\text{WLOA}$. 
Interestingly, $\dot{d}_\text{WILT}$ approximates $d_\text{MPNN}$(mean) better, while $\bar{d}_\text{WILT}$ approximates  $d_\text{MPNN}$(sum) better, except for $d_\text{MPNN}$(sum) trained on Lipophilicity, where their performance is close.
This observation is consistent with the theoretical analysis in Section~\ref{sec:wilt:exp}.

\begin{figure*}[t]
    \begin{minipage}[t]{0.65\columnwidth}
        \centering
        \includegraphics[width=\columnwidth]{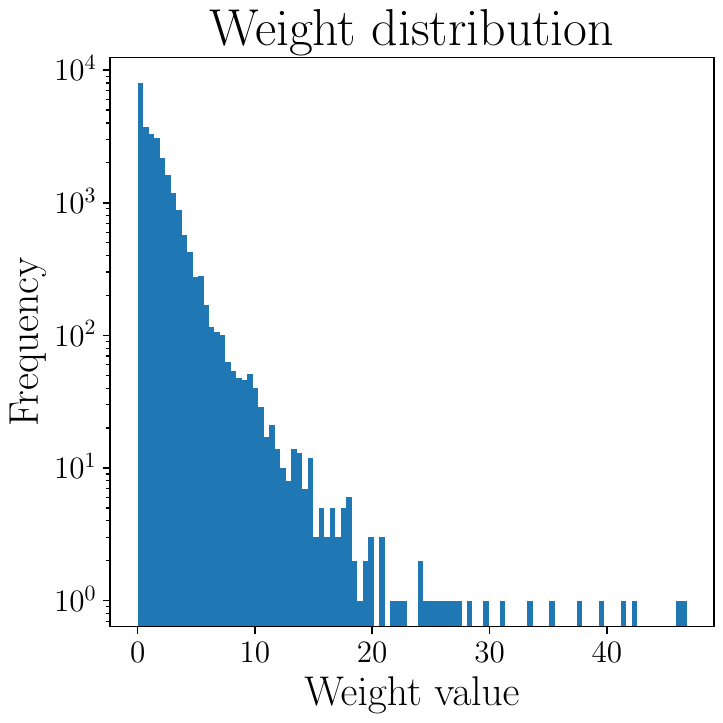}
        \caption{The distribution of the edge weights of WILT after distillation.\label{fig:weight}}
    \end{minipage}
    \hfill
    \begin{minipage}[t]{0.65\columnwidth}
        \centering
        \includegraphics[width=\columnwidth]{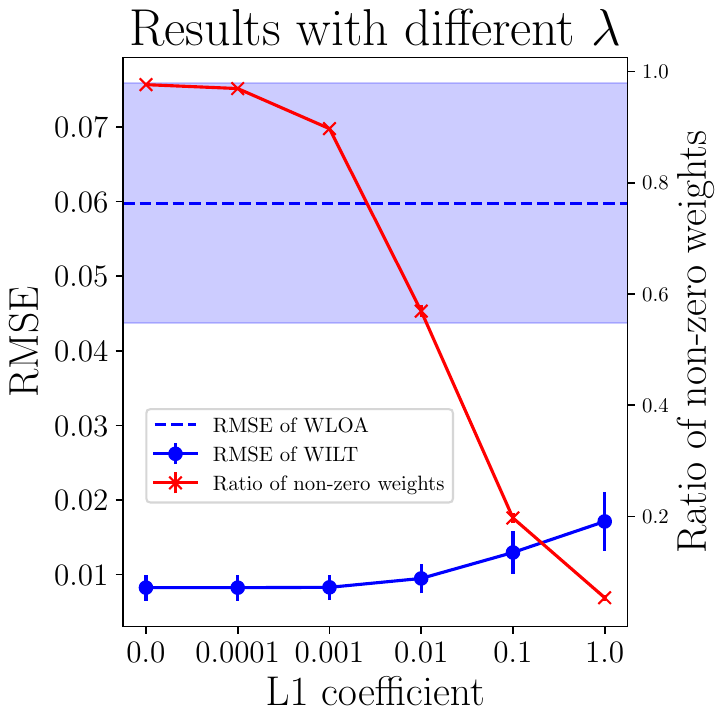}
        \caption{The RMSE and the ratio of non-zero edge weights after distillation under different L1 coefficients (mean and std over five different seeds).\label{fig:l1}}
    \end{minipage}
    \hfill
    \begin{minipage}[t]{0.65\columnwidth}
        \begin{tikzpicture}[scale=1.0, every node/.style={scale=0.5},  molecule/.style={draw=none, inner sep=0, outer sep=0}]
  \node [molecule] (chem1) [anchor=center] {
    \chemfig{[:-30]*6(-[,,,,red]?(-[,,,,red]\cn{H})-[,,,,red](-[:240,,,,red]\cn{O}?[,,,,red])(-[,,,,red]\cn{H})-[,,,,red]=-=)}
  };
  \node[anchor=center] (node1) at (chem1.north west) {(\textbf{1})};
  
  \node [molecule] (chem2) [anchor=north, right=of chem1] {
    \chemfig{*6(=(-[,,,,red]\cn{C}~[,,,,red]\cn{N})-=(-NO_2)-=-)}
  };
  \node[anchor=center] at (chem2.north west) {(\textbf{2})};
  
  \node [molecule] (chem3) [anchor=center, below=0cm of chem1] {
    \chemfig{[:30]*6(=(-\cn{C}H_2-[:0,,,,red]\cn{N}(-[:-60,,,,red]\cn{C}H_3)-[:60,,,,red]\cn{N}=[:60]O)-=-(-F)=-)}
  };
  \node[anchor=north east] at (chem3.north west) {(\textbf{3})};

  \node [molecule] (chem4) [anchor=center, below=0.5cm of chem2] {
    \chemfig{CH_3-CH(-[:60]\cn{Br})(-[:-60]\cn{Br})}
  };
  \node[anchor=north east] at (chem4.north west) {(\textbf{4})};
\end{tikzpicture}
        \caption{Example graphs with highlighted significant subgraphs corresponding to colors with the largest weights.\label{fig:molecule}}
    \end{minipage}
\end{figure*}

Next, we look into the distribution of the learned edge weights of WILT.
Figure~\ref{fig:weight} shows the histogram of the edge weights of the WILT with dummy node normalization after distillation from a 3-layer GCN with sum pooling trained on Mutagenicity.
The distribution is heavily skewed towards zero.
This plot, together with Proposition~\ref{prop:fast_wilting}, suggests that the relative position of MPNN graph embeddings is determined based on only a small subset of WL colors.
To further verify this idea, we added an L1 regularization term to the objective function $\cL$ and minimized it so that $w(e_{\{c, p(c)\}})$ would be set to zero for some colors.
Figure~\ref{fig:l1} shows the RMSE between $d_\text{MPNN}$ and the resulting $\bar{d}_\text{WILT}$, as well as the ratio of non-zero edge weights, under different L1 coefficient $\lambda$. 
As expected, the larger $\lambda$ is, the more edge weights are set to zero and the larger the RMSE.
However, it is worth noting that $\bar{d}_\text{WILT}$ is much better aligned with $d_\text{MPNN}$ than $d_\text{WLOA}$ even when trained with $\lambda=1.0$ and about 95\% of the edge weights are zero.
This good approximation with only 5\% non-zero edges implies that MPNNs rely on only a few important WL colors to define $d_\text{MPNN}$.

Finally, we show the subgraphs corresponding to the colors with the largest weights, thus influencing $d_\text{MPNN}$ the most. 
Again, we only show results for the 3-layer GCN with sum pooling trained on the Mutagenicity dataset.
To avoid identifying colors that are too rare, we only consider colors that appear in at least 1\% of the entire graphs.
Figure~\ref{fig:molecule} shows example graphs with subgraphs corresponding to colors with the four largest weights.
The identified subgraphs in (1) and (4) are known to be characteristic of mutagenic molecules \citep{kazius2005derivation}.
In fact, (1) and (4) are classified as ``epoxide" and ``aliphatic halide" based on the highlighted subgraphs.
Given that only a tiny fraction of the entire WL colors correspond to the subgraphs reported in \citep{kazius2005derivation}, this result suggests that MPNNs learn the relative position of graph embeddings based on WL colors that are also known to be functionally important by domain knowledge.

\section{Conclusions}
We analyzed the metric properties of the embedding space of MPNNs. 
We found that the alignment with the functional pseudometric improves during training and is a key to high predictive performance. 
In contrast, the alignment with the structural psudometrics does not improve and is not correlated with performance.
To understand how MPNNs learn and reflect the functional distance between graphs, we propose a theoretically sound and efficiently computable new pseudometric on graphs using WILT.
By examining the edge weights of the distilled WILT, we found that only a tiny fraction of the entire WL colors influence $d_\text{MPNN}$.
The identified colors correspond to subgraphs that are known to be functionally important from domain knowledge.

One limitation of our study is that we only distilled two GNN architectures (GCN and GIN) with fixed hyperparameters to WILT. 
Thus, it remains to be seen how different architectures and hyperparameters affect the edge weights of WILT. 
We also limited our analysis to the final embeddings of MPNNs, but in principle, WILT can be trained to approximate the MPNN embedding distance at internal layers. 
It would be interesting to investigate the embedding distances at different layers, and how they relate to the performance. 
We expect results similar to \citet{liu2024exploring}, which showed that consistency between distances at different iterations is a key to high performance.
While we investigated MPNNs specifically, there is a hierarchy of more and more expressive GNNs that are bounded in expressiveness by corresponding WL test variants.
In this paper, we have defined WILT on the hierarchy of 1-WL labels. Still, it is straightforward to extend the proposed WILT metric to color hierarchies obtained from higher-order WL variants \citep{goml,geerts2022expressiveness} or extended message passing schemes \citep{fabri2022subgraphgnns,cate2024PAINGAIN}. 
While beyond the scope of this work, higher-order WILTing trees may prove useful in interpreting a range of GNNs.
However, as the number of trainable WILT weights scales with the number of colors, the practical relevance of higher-order WILTs remains an open question.
It is also worth exploring extending WILT to the analysis of GNNs for node classification on large graphs such as social networks or citation networks. 
\citet{kothapalli2023neural} showed that GNNs are trained such that their \emph{node} embeddings respect the functional alignment, but their analysis was limited to unrealistic graphs generated from a stochastic block model. 
We believe that the node embedding space of GNNs can also be distilled to WILT in a similar way, i.e., by tuning the weights to approximate the node embedding distance with a path distance on WILT.
Using WILT for a purpose other than understanding GNNs is also interesting. 
For example, by training WILT's edge parameters from scratch, we might be able to build a high-performance, interpretable graph learning method.

\section*{Acknowledgements}
PW acknowledges TGs ability to devise great paper titles involving puns on some authors' last names.
This work was supported by the Vienna Science and Technology Fund (WWTF) and the City of Vienna project StruDL (10.47379/ ICT22059) and by the  Austrian Science Fund (FWF)
project NanOX-ML (6728).
We express our gratitude to the Japan Student Services Organization for financially supporting the first author's five-month research stay in Vienna.
Finally, we thank the reviewers for their insightful comments.

\section*{Impact Statement}
This paper advances our understanding of what and how competitive MPNNs learn.
We hope that this will contribute to safe and robust machine learning.
We are not aware of any immediate negative consequences of our work.

\bibliography{icml2025}

\begin{thebibliography}{34}
\providecommand{\natexlab}[1]{#1}
\providecommand{\url}[1]{\texttt{#1}}
\expandafter\ifx\csname urlstyle\endcsname\relax
  \providecommand{\doi}[1]{doi: #1}\else
  \providecommand{\doi}{doi: \begingroup \urlstyle{rm}\Url}\fi

\bibitem[Azzolin et~al.(2023)Azzolin, Longa, Barbiero, Li{\`{o}}, and Passerini]{azzolin2022global}
Azzolin, S., Longa, A., Barbiero, P., Li{\`{o}}, P., and Passerini, A.
\newblock Global explainability of {GNNs} via logic combination of learned concepts.
\newblock In \emph{International Conference on Learning Representations}, 2023.

\bibitem[B{\"o}ker et~al.(2024)B{\"o}ker, Levie, Huang, Villar, and Morris]{boker2024fine}
B{\"o}ker, J., Levie, R., Huang, N., Villar, S., and Morris, C.
\newblock Fine-grained expressivity of graph neural networks.
\newblock \emph{Advances in Neural Information Processing Systems}, 36, 2024.

\bibitem[Cai et~al.(1992)Cai, F{\"{u}}rer, and Immerman]{cfi}
Cai, J., F{\"{u}}rer, M., and Immerman, N.
\newblock An optimal lower bound on the number of variables for graph identification.
\newblock \emph{Combinatorica}, 12\penalty0 (4):\penalty0 389--410, 1992.

\bibitem[Chuang \& Jegelka(2022)Chuang and Jegelka]{tmd}
Chuang, C.-Y. and Jegelka, S.
\newblock Tree mover's distance: Bridging graph metrics and stability of graph neural networks.
\newblock \emph{Advances in Neural Information Processing Systems}, 2022.

\bibitem[Franks et~al.(2024)Franks, Morris, Velingker, and Geerts]{wlmargin2024}
Franks, B.~J., Morris, C., Velingker, A., and Geerts, F.
\newblock Weisfeiler-leman at the margin: When more expressivity matters.
\newblock In \emph{Forty-first International Conference on Machine Learning, {ICML} 2024, Vienna, Austria, July 21-27, 2024}, 2024.

\bibitem[Frasca et~al.(2022)Frasca, Bevilacqua, Bronstein, and Maron]{fabri2022subgraphgnns}
Frasca, F., Bevilacqua, B., Bronstein, M.~M., and Maron, H.
\newblock Understanding and extending subgraph {GNNs} by rethinking their symmetries.
\newblock In \emph{Advances in Neural Information Processing Systems}, 2022.

\bibitem[Geerts \& Reutter(2022)Geerts and Reutter]{geerts2022expressiveness}
Geerts, F. and Reutter, J.~L.
\newblock Expressiveness and approximation properties of graph neural networks.
\newblock In \emph{International Conference on Learning Representations}, 2022.

\bibitem[Gilmer et~al.(2017)Gilmer, Schoenholz, Riley, Vinyals, and Dahl]{mpnn}
Gilmer, J., Schoenholz, S.~S., Riley, P.~F., Vinyals, O., and Dahl, G.~E.
\newblock Neural message passing for quantum chemistry.
\newblock In \emph{International Conference on Machine Learning}, 2017.

\bibitem[Graziani et~al.(2024)Graziani, Drucks, Jogl, Bianchini, Scarselli, and Gärtner]{cate2024PAINGAIN}
Graziani, C., Drucks, T., Jogl, F., Bianchini, M., Scarselli, F., and Gärtner, T.
\newblock Expressiveness and approximation properties of graph neural network.
\newblock In \emph{International Conference on Machine Learning}, 2024.

\bibitem[Kazius et~al.(2005)Kazius, McGuire, and Bursi]{kazius2005derivation}
Kazius, J., McGuire, R., and Bursi, R.
\newblock Derivation and validation of toxicophores for mutagenicity prediction.
\newblock \emph{Journal of Medicinal Chemistry}, 48\penalty0 (1):\penalty0 312--320, 2005.

\bibitem[Kipf \& Welling(2017)Kipf and Welling]{gcn}
Kipf, T.~N. and Welling, M.
\newblock Semi-supervised classification with graph convolutional networks.
\newblock In \emph{International Conference on Learning Representations}, 2017.

\bibitem[Kothapalli et~al.(2023)Kothapalli, Tirer, and Bruna]{kothapalli2023neural}
Kothapalli, V., Tirer, T., and Bruna, J.
\newblock A neural collapse perspective on feature evolution in graph neural networks.
\newblock In \emph{Thirty-seventh Conference on Neural Information Processing Systems}, 2023.

\bibitem[Kriege et~al.(2016)Kriege, Giscard, and Wilson]{kriege2016wloa}
Kriege, N.~M., Giscard, P., and Wilson, R.~C.
\newblock On valid optimal assignment kernels and applications to graph classification.
\newblock In \emph{Advances in Neural Information Processing Systems}, 2016.

\bibitem[Kriege et~al.(2020)Kriege, Johansson, and Morris]{kriege2020survey}
Kriege, N.~M., Johansson, F.~D., and Morris, C.
\newblock A survey on graph kernels.
\newblock \emph{Applied Network Science}, 5:\penalty0 1--42, 2020.

\bibitem[Köhler \& Heindorf(2024)Köhler and Heindorf]{koehler2024utilizing}
Köhler, D. and Heindorf, S.
\newblock Utilizing description logics for global explanations of heterogeneous graph neural networks.
\newblock \emph{arXiv preprint:2405.12654}, 2024.

\bibitem[Le et~al.(2019)Le, Yamada, Fukumizu, and Cuturi]{le2019tree}
Le, T., Yamada, M., Fukumizu, K., and Cuturi, M.
\newblock Tree-sliced variants of {Wasserstein} distances.
\newblock \emph{Advances in Neural Information Processing Systems}, 2019.

\bibitem[Li \& Leskovec(2022)Li and Leskovec]{gnnbook-ch5-li}
Li, P. and Leskovec, J.
\newblock The expressive power of graph neural networks.
\newblock In \emph{Graph Neural Networks: Foundations, Frontiers, and Applications}. 2022.

\bibitem[Liu et~al.(2022{\natexlab{a}})Liu, Feng, and Hu]{GNNBook-ch7-liu}
Liu, N., Feng, Q., and Hu, X.
\newblock Interpretability in graph neural networks.
\newblock In \emph{Graph Neural Networks: Foundations, Frontiers, and Applications}. 2022{\natexlab{a}}.

\bibitem[Liu et~al.(2024)Liu, Cai, Yang, and Yan]{liu2024exploring}
Liu, X., Cai, Y., Yang, Q., and Yan, Y.
\newblock Exploring consistency in graph representations: from graph kernels to graph neural networks.
\newblock In \emph{The Thirty-eighth Annual Conference on Neural Information Processing Systems}, 2024.

\bibitem[Liu et~al.(2022{\natexlab{b}})Liu, Jin, Pan, Zhou, Zheng, Xia, and Yu]{graphsslsurvey}
Liu, Y., Jin, M., Pan, S., Zhou, C., Zheng, Y., Xia, F., and Yu, P.~S.
\newblock Graph self-supervised learning: A survey.
\newblock \emph{IEEE Transactions on Knowledge and Data Engineering}, 35\penalty0 (6):\penalty0 5879--5900, 2022{\natexlab{b}}.

\bibitem[Maron et~al.(2019)Maron, Ben-Hamu, Serviansky, and Lipman]{maron2019provably}
Maron, H., Ben-Hamu, H., Serviansky, H., and Lipman, Y.
\newblock Provably powerful graph networks.
\newblock \emph{Advances in Neural Information Processing Systems}, 2019.

\bibitem[Morris et~al.(2019)Morris, Ritzert, Fey, Hamilton, Lenssen, Rattan, and Grohe]{morris2019weisfeiler}
Morris, C., Ritzert, M., Fey, M., Hamilton, W.~L., Lenssen, J.~E., Rattan, G., and Grohe, M.
\newblock Weisfeiler and {Leman} go neural: Higher-order graph neural networks.
\newblock In \emph{AAAI Conference on Artificial Intelligence}, 2019.

\bibitem[Morris et~al.(2020)Morris, Kriege, Bause, Kersting, Mutzel, and Neumann]{tudataset}
Morris, C., Kriege, N.~M., Bause, F., Kersting, K., Mutzel, P., and Neumann, M.
\newblock Tudataset: A collection of benchmark datasets for learning with graphs.
\newblock In \emph{ICML Workshop on Graph Representation Learning and Beyond}, 2020.

\bibitem[Morris et~al.(2023)Morris, Lipman, Maron, Rieck, Kriege, Grohe, Fey, and Borgwardt]{goml}
Morris, C., Lipman, Y., Maron, H., Rieck, B., Kriege, N.~M., Grohe, M., Fey, M., and Borgwardt, K.~M.
\newblock Weisfeiler and leman go machine learning: The story so far.
\newblock \emph{Journal of Machine Learning Research}, 24:\penalty0 333:1--333:59, 2023.

\bibitem[Morris et~al.(2024)Morris, Frasca, Dym, Maron, Ceylan, Levie, Lim, Bronstein, Grohe, and Jegelka]{morris2024}
Morris, C., Frasca, F., Dym, N., Maron, H., Ceylan, {\.I}.~{\.I}., Levie, R., Lim, D., Bronstein, M.~M., Grohe, M., and Jegelka, S.
\newblock Position: Future directions in the theory of graph machine learning.
\newblock In \emph{International Conference on Machine Learning}, 2024.

\bibitem[Müller et~al.(2024)Müller, Faber, Martinkus, and Wattenhofer]{graphchef2024}
Müller, P., Faber, L., Martinkus, K., and Wattenhofer, R.
\newblock {GraphChef}: Decision-tree recipes to explain graph neural networks.
\newblock In \emph{International Conference on Learning Representations}, 2024.

\bibitem[Pluska et~al.(2024)Pluska, Welke, Gärtner, and Malhotra]{pluska2024logicaldistillation}
Pluska, A., Welke, P., Gärtner, T., and Malhotra, S.
\newblock Logical distillation of graph neural networks.
\newblock In \emph{International Conference on Knowledge Representation and Reasoning}, 2024.

\bibitem[Sanfeliu \& Fu(1983)Sanfeliu and Fu]{ged}
Sanfeliu, A. and Fu, K.
\newblock A distance measure between attributed relational graphs for pattern recognition.
\newblock \emph{{IEEE} Transactions on Systems, Man, and Cybernetics: Systems}, 13\penalty0 (3):\penalty0 353--362, 1983.

\bibitem[Togninalli et~al.(2019)Togninalli, Ghisu, Llinares-L{\'o}pez, Rieck, and Borgwardt]{wassersteinwl}
Togninalli, M., Ghisu, E., Llinares-L{\'o}pez, F., Rieck, B., and Borgwardt, K.
\newblock Wasserstein {Weisfeiler-Lehman} graph kernels.
\newblock \emph{Advances in Neural Information Processing Systems}, 2019.

\bibitem[Villani(2009)]{villani2009optimal}
Villani, C.
\newblock \emph{Optimal transport: old and new}, volume 338 of \emph{Grundlehren der mathematischen Wissenschaften}.
\newblock Springer, 2009.

\bibitem[Wu et~al.(2018)Wu, Ramsundar, Feinberg, Gomes, Geniesse, Pappu, Leswing, and Pande]{moleculenet}
Wu, Z., Ramsundar, B., Feinberg, E.~N., Gomes, J., Geniesse, C., Pappu, A.~S., Leswing, K., and Pande, V.
\newblock Moleculenet: a benchmark for molecular machine learning.
\newblock \emph{Chemical Science}, 9\penalty0 (2):\penalty0 513--530, 2018.

\bibitem[Xu et~al.(2019)Xu, Hu, Leskovec, and Jegelka]{gin}
Xu, K., Hu, W., Leskovec, J., and Jegelka, S.
\newblock How powerful are graph neural networks?
\newblock In \emph{International Conference on Learning Representations}, 2019.

\bibitem[Yuan et~al.(2022)Yuan, Yu, Gui, and Ji]{yuan2022explainability}
Yuan, H., Yu, H., Gui, S., and Ji, S.
\newblock Explainability in graph neural networks: A taxonomic survey.
\newblock \emph{IEEE Transactions on Pattern Analysis and Machine Intelligence}, 45\penalty0 (5):\penalty0 5782--5799, 2022.

\bibitem[Zhou et~al.(2020)Zhou, Cui, Hu, Zhang, Yang, Liu, Wang, Li, and Sun]{zhou2020graph}
Zhou, J., Cui, G., Hu, S., Zhang, Z., Yang, C., Liu, Z., Wang, L., Li, C., and Sun, M.
\newblock Graph neural networks: A review of methods and applications.
\newblock \emph{AI open}, 1:\penalty0 57--81, 2020.

\end{thebibliography}
\bibliographystyle{icml2025}

\newpage
\appendix
\onecolumn

\section{Theoretical Analysis}

In this section, we present the proofs of our theoretical results in the main paper. 
We formally define related structural pseudometrics and show that $d_\text{WILT}$ generalizes them. 

\subsection{Structural Pseudometrics}
\label{app:theory:struc}
Here we introduce the definitions of the graph edit distance \citep[$d_\text{GED}$,][]{ged}, Weisfeiler Leman optimal assignment distance \citep[$d_\text{WLOA}$,][]{kriege2016wloa}, and Wasserstein Weisfeiler Leman graph distance \citep[$d_\text{WWL}$,][]{wassersteinwl}.
For the definition of tree mover's distance, please refer to the original paper \citep{tmd}.

\begin{definition}[Graph Edit Distance \citep{ged}]
\label{def:ged}
Let $\cE$ be the set of graph edit operations, and $c: \cE \to \bbR_{\ge0}$ be a function that assigns a cost to each operation. 
Then, the \emph{graph edit distance} (GED) between $G$ and $H$ is defined as the minimum cost of a sequence of edit operations that transform $G$ into $H$. 
Formally,
\begin{equation*}
    d_\text{GED}(G, H) \coloneqq \min_{s \in S(G, H)} \sum_{e \in s} c(e),
\end{equation*}
where $S(G, H)$ is a set of sequences of graph edit operations that transform $G$ into $H$.
\end{definition}

In the experiments in Appendix~\ref{app:exp_struc}, $\cE$ consists of insertion and deletion of single nodes and single edges, as well as substitution of single node or edge attributes.
We set the cost of each operation to 1, i.e., $c(e) \equiv 1$.
Next, we move on to the Weisfeiler Leman optimal assignment (WLOA) kernel. %

\begin{definition}[Weisfeiler Leman Optimal Assignment Kernel \citep{kriege2016wloa}]
\label{def:wloa}
Consider $G=(V_G,E_G)$ and $H=(V_H, E_H)$. Let $V'_G$ and $V'_H$ be the extended node sets resulting from adding special nodes $z$ to $G$ or $H$ so that $G$ and $H$ have the same number of nodes. 
Let the base kernel $k$ be defined as:
\begin{equation*}
    k(v, u) \coloneqq
    \begin{cases}
        \sum_{l=0}^{L} \mathbbm{1}_{c_v^{(l)} = c_u^{(l)}} & (v \neq z \land u \neq z)\\
        0 & (v=z \lor u=z),
    \end{cases}
\end{equation*}
where $c_v^{(l)}$ and $c_u^{(l)}$ represent the colors of vertices $v$ and $u$ at iteration $l$ of the WL algorithm (see Section~\ref{sec:prelim}).
Then, the Weisfeiler Leman optimal assignment (WLOA) kernel is defined as:
\begin{equation*}
    k_\text{WLOA}(G, H) \coloneqq \max_{B \in \cB(V'_G, V'_H)} \sum_{(v_G, u_H) \in B} k(v_G, u_H),
\end{equation*}
where $\cB(V'_G, V'_H)$ denotes the set of all possible bijections between $V'_G$ and $V'_H$.
\end{definition}

\citet{kriege2016wloa} proved that $k_\text{WLOA}$ is a positive semidefinite kernel function. 
While they focused only on the kernel, a corresponding graph pseudometric can be defined in the following way:

\begin{definition}[Weisfeiler Leman Optimal Assignment (WLOA) Distance]
\label{prop:dwloa}
The function $d_\text{WLOA}$ below is a pseudometric on the set of pairwise nonisomorphic graphs $\cG$:
\begin{equation*}
    d_\text{WLOA}(G, H) \coloneqq 
    (L+1) \cdot \max(|V_G|, |V_H|) - k_\text{WLOA}(G, H)
\end{equation*}
\end{definition}
\begin{proof}
Theorem~\ref{thm:wloa_wilt} shows that $d_\text{WLOA}$ defined as above is a special case of $\bar{d}_\text{WILT}$. Since $\bar{d}_\text{WILT}$ is a pseudometric on the set of pairwise nonisomorphic graphs $\cG$, so is $d_\text{WLOA}$.
\end{proof}

We will show later that the above WLOA distance is a special case of our WILT distance with dummy node normalization (Theorem~\ref{thm:wloa_wilt}).
\citet{wassersteinwl} proposed another graph pseudometric based on the WL algorithm, called Wasserstein Weisfeiler Leman graph distance.

\begin{definition}[Wasserstein Weisfeiler Leman (WWL) Distance \citep{wassersteinwl}]
\label{def:wwl}
Let $d_\text{ham}(v, u)$ be the hamming distance between $\left[c_v^{(0)}, c_v^{(1)}, \ldots c_v^{(L)}\right]$ and $\left[c_u^{(0)}, c_u^{(1)}, \ldots c_u^{(L)}\right]$, where $c_v^{(l)}$ is the color of node $v$ at iteration $l$ of the WL algorithm (see Section~\ref{sec:prelim}). Specifically,
\begin{equation*}
    d_\text{ham}(v, u) \coloneqq \frac{1}{L+1} \sum_{l=0}^{L} \mathbbm{1}_{c_v^{(l)} \neq c_u^{(l)}}.
\end{equation*}
Then the WWL distance is defined as
\begin{equation*}
    d_\text{WWL}(G, H) \coloneqq \min_{P \in \Gamma_\text{WWL}} \sum_{v_i \in V_G} \sum_{u_j \in V_H} P_{i, j} d_\text{ham}(v_i, u_j),
\end{equation*}
where $\Gamma_\text{WWL} \coloneqq \{P \in \bbR_{\geq0}^{|V_G| \times |V_H|} \mid P \bm{1} = \frac{1}{|V_G|} \bm{1}, P^T \bm{1} = \frac{1}{|V_H|} \bm{1}\}$ is a set of valid transports between two uniform discrete distributions.
\end{definition}

\citet{wassersteinwl} have shown that $d_\text{WWL}$ is a pseudometric.
In addition, they proposed a corresponding kernel $k_\text{WWL}(G, H) \coloneqq e^{-\lambda d_\text{WWL}(G, H)}$, and showed that it is positive semidefinite. 
We will prove later in Theorem~\ref{thm:wwl_wilt} that our WILT distance with size normalization includes this WWL distance as a special case. 

\subsection{Functional Pseudometrics}
\label{app:theory:func}
Here, we show that $d_\text{func}$ is a pseudometric. 
\defdfunc*
\begin{proof}
We start with the classification case.
\begin{align*}
    d_\text{func}(G, G) &= \mathbbm{1}_{y_G \neq y_G}\\ 
    &= 0\\
    d_\text{func}(G, H) &= \mathbbm{1}_{y_G \neq y_H}\\
    &= \mathbbm{1}_{y_H \neq y_G}\\ 
    &= d_\text{func}(H, G)\\
    d_\text{func}(G, F) &= \mathbbm{1}_{y_G \neq y_F}\\
    &\leq \mathbbm{1}_{y_G \neq y_H} + \mathbbm{1}_{y_H \neq y_F}\\
    &= d_\text{func}(G, H) + d_\text{func}(H, F)\\
\end{align*}
For the regression case we can assume that $\underset{I \in \cG}{\sup} y_I \neq \underset{I \in \cG}{\inf} y_I$ as the regression problem is trivial otherwise. Then we can show:
\begin{align*}
    d_\text{func}(G, G) &= \frac{|y_G - y_G|}{\underset{I \in \cG}{\sup} y_I - \underset{I \in \cG}{\inf} y_I}\\ 
    &= 0\\
    d_\text{func}(G, H) &= \frac{|y_G - y_H|}{\underset{I \in \cG}{\sup} y_I - \underset{I \in \cG}{\inf} y_I}\\
    &= \frac{|y_H - y_G|}{\underset{I \in \cG}{\sup} y_I - \underset{I \in \cG}{\inf} y_I}\\ 
    &= d_\text{func}(H, G)\\
    d_\text{func}(G, F) &= \frac{|y_G - y_F|}{\underset{I \in \cG}{\sup} y_I - \underset{I \in \cG}{\inf} y_I}\\
    &\leq \frac{|y_G - y_H|}{\underset{I \in \cG}{\sup} y_I - \underset{I \in \cG}{\inf} y_I} + \frac{|y_H - y_F|}{\underset{I \in \cG}{\sup} y_I - \underset{I \in \cG}{\inf} y_I}\\
    &= d_\text{func}(G, H) + d_\text{func}(H, F)\\
\end{align*}
In both cases, identity, symmetry, and triangle inequality are satisfied.
\end{proof}
If the supremum (resp. infimum) of $y_G$ in $\cG$ are unknown, they can be approximated by the maximum (resp. minimum) distance in a training dataset $\cD$, and we can similarly prove that $d_\text{func}$ is a pseudometric.

\subsection{Normalized WILTing Distances and Relationship to Existing Distances}
\label{app:theory:wilt}

We present the formal definitions of the size normalization and dummy node normalization. We then show that $\dot{d}_\text{WILT}$ with size normalization generalizes the WWL distance $d_\text{WWL}$ and that $\bar{d}_\text{WILT}$ with dummy node normalization generalizes the WLOA distance $d_\text{WLOA}$.

\begin{definition}[WILTing Distance with Size Normalization]
\label{def:sizenorm}
We define the WILTing distance with size normalization as:
\begin{equation*}
    \dot{d}_\text{WILT}(G, H; w) \coloneqq \min_{P \in \dot{\Gamma}} \sum_{v_i \in V_G}\sum_{u_j \in V_H} P_{i,j} d_{\text{path}}(c_{v_i}^{(L)}, c_{u_j}^{(L)}),
\end{equation*}
where $\dot{\Gamma} \coloneqq \{P \in \reals^{|V_G| \times |V_H|} \mid P_{i, j} \ge 0, P\bm{1} = \frac{1}{|V_G|}\bm{1}, P^{T}\bm{1} = \frac{1}{|V_H|}\bm{1}\}$. 
It is equivalent to:
\begin{equation*}
    \dot{d}_\text{WILT}(G, H; w) = \sum_{c \in V(T_\cD) \setminus \{r\}} w(e_{\{c, p(c)\}}) \left| \dot{\nu}_c^G - \dot{\nu}_c^H \right|,
\end{equation*}
where $\dot{\nu}^G \coloneqq \frac{1}{|V_G|} \nu^G$.
\end{definition}

The only difference between Definition~\ref{def:dwilt} and Definition~\ref{def:sizenorm} is the mass assigned to each node.
The equivalence between the two definitions of $\dot{d}_\text{WILT}$ is a straightforward consequence of \citep{le2019tree}.
The other normalization is defined as follows.

\begin{definition}[WILTing Distance with Dummy Node Normalization]
\label{def:dummynorm}
Let $\bar{V}_G$ be an extension of $V_G$ with additional $N - |V_G|$ isolated dummy nodes with a special attribute, where $N \coloneqq \max_{G \in \cD} |V_G|$.
Let $\bar{T}_\cD$ be WILT built from the extended graphs $\{(\bar{V}_G, E_G)\}_{G \in \cD}$.
Note that $\bar{T}_\cD$ is just a slight modification of $T_\cD$ (see Figure~\ref{fig:wilt}).
We define the WILTing distance with dummy node normalization as:
\begin{equation*}
    \bar{d}_\text{WILT}(G, H; w) \coloneqq \min_{P \in \bar{\Gamma}} \sum_{v_i \in \bar{V}_G}\sum_{u_j \in \bar{V}_H} P_{i,j} d_{\text{path}}(\bar{c}_{v_i}^{(L)}, \bar{c}_{u_j}^{(L)}),
\end{equation*}
where $\bar{\Gamma} \coloneqq \{P \in \reals^{|\bar{V}_G| \times |\bar{V}_H|} \mid P_{i, j} \ge 0, P\bm{1} = \bm{1}, P^{T}\bm{1} = \bm{1}\}$, and $\bar{c}_{v}^{(L)}$ is the color of node $v$ on $\bar{T}_\cD$ after $L$ iterations. 
An equivalent definition is:
\begin{equation*}
    \bar{d}_\text{WILT}(G, H; w) = \sum_{\bar{c} \in V(\bar{T}_\cD) \setminus \{r\}} w(e_{\{\bar{c}, p(\bar{c})\}}) \left| \bar{\nu}_{\bar{c}}^G - \bar{\nu}_{\bar{c}}^H \right|,
\end{equation*}
where $\bar{\nu}^G$ is the WILT embedding of $G$ using $\bar{T}_\cD$.
\end{definition}

Intuitively speaking, we add dummy nodes to all the graphs so that they have the same number of nodes\footnote{In fact, $\bar{d}_\text{WILT}$ remains a pseudometric even on $\cD = \cG$, as it can be defined without explicit use of $N$. To this end, note that $\lim_{N\to\infty} | \bar{\nu}_{c_\neg^i}^G - \bar{\nu}_{c_\neg^i}^H | = |V(G) - V(H)|$ for any dummy node color $c_\neg^i$.}, and compute the WILTing distance in exactly the same way as shown in Section~\ref{sec:wilt:dwilt}.

Next, we show that $\dot{d}_\text{WILT}$ includes the Wasserstein Weisfeiler Leman distance and $\bar{d}_\text{WILT}$ includes the Weisfeiler Leman optimal assignment distance as a special case, respectively.

\begin{theorem}[$d_\text{WWL}$ as A Special Case of $\dot{d}_\text{WILT}$]
\label{thm:wwl_wilt}
The WWL distance in Definition~\ref{def:wwl} is equal to the WILTing distance with size normalization with all WILT edge weights set to $\frac{1}{2(L+1)}$.
Specifically,
\begin{equation*}
    d_\text{WWL}(G, H) = \dot{d}_\text{WILT}\left(G, H; w \equiv \frac{1}{2(L+1)}\right).
\end{equation*}
\end{theorem}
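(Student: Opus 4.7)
The plan is to reduce the equality of the two optimal transport distances to an equality of their ground pseudometrics: if I can show that, with uniform weights $w \equiv \tfrac{1}{2(L+1)}$, the shortest-path metric on the WILT between $c_v^{(L)}$ and $c_u^{(L)}$ coincides exactly with the Hamming distance $d_\text{ham}(v,u)$ of Definition~\ref{def:wwl}, then both $d_\text{WWL}$ and $\dot{d}_\text{WILT}$ are optimal transports between uniform distributions on $V_G$ and $V_H$ with identical cost matrices, so the minima coincide. The transport polytopes $\Gamma_\text{WWL}$ and $\dot{\Gamma}$ agree by definition, so only the ground-metric identification is substantive.

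The main work is therefore the following local observation about the WILT: for any nodes $v,u$ from graphs in $\cD$, the unique path in $T_\cD$ from $c_v^{(L)}$ to $c_u^{(L)}$ has length exactly $2(L - l^{*})$ edges, where $l^{*}(v,u) \in \{-1,0,\ldots,L\}$ is the largest iteration index with $c_v^{(l^{*})} = c_u^{(l^{*})}$ (using $l^{*}=-1$ when no such iteration exists, i.e., when the lowest common ancestor is the root $r$). This follows from two facts I would verify in turn. First, by construction of the WILT, the unique parent of $c_v^{(l)}$ is $c_v^{(l-1)}$, so the chain $r, c_v^{(0)}, c_v^{(1)}, \ldots, c_v^{(L)}$ is precisely the root-to-leaf path of $c_v^{(L)}$. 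Second, because the WL aggregation and update are injective functions of the previous colors, equality at iteration $l$ forces equality at all iterations $l' \le l$; contrapositively, the set of iterations where $v$ and $u$ agree is a downward-closed initial segment $\{0,\ldots,l^{*}\}$. Combining these, the lowest common ancestor of $c_v^{(L)}$ and $c_u^{(L)}$ in $T_\cD$ is $c_v^{(l^{*})} = c_u^{(l^{*})}$ (resp.\ $r$ if $l^{*}=-1$), and the path length in edges is $2(L-l^{*})$ (resp.\ $2(L+1)$, which the same formula yields by convention).

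With the uniform weight $w \equiv \tfrac{1}{2(L+1)}$ this gives
\begin{equation*}
d_\text{path}\!\left(c_v^{(L)}, c_u^{(L)};\, w\right) \;=\; \frac{L - l^{*}(v,u)}{L+1} \;=\; \frac{1}{L+1}\sum_{l=0}^{L} \mathbbm{1}_{c_v^{(l)} \neq c_u^{(l)}} \;=\; d_\text{ham}(v,u),
\end{equation*}
where the middle equality uses that the indicators sum to exactly $L - l^{*}$ by the initial-segment property. Substituting this identity into the optimal-transport definition of $\dot{d}_\text{WILT}$ (Definition~\ref{def:sizenorm}) recovers the definition of $d_\text{WWL}$ (Definition~\ref{def:wwl}) term by term, and the equality of the two minima follows.

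The only delicate step is the initial-segment property used to tie the Hamming count to the path length; everything else is bookkeeping. I would state it as a small standalone lemma about WILT (``equality of colors is preserved by taking parents''), since it is the sole place where the injectivity of the WL update and the tree structure of $T_\cD$ are both invoked. Once that lemma is in hand, the cost-matrix identification and hence the equality of the transport optima is immediate.
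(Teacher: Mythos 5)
Your proof is correct and takes essentially the same route as the paper's: both reduce the claim to the identity between the Hamming ground metric $d_\text{ham}$ and the uniformly-weighted path metric on the WILT, then observe that the transport polytopes $\Gamma_\text{WWL}$ and $\dot{\Gamma}$ coincide. The paper states the ground-metric identity as a single unjustified step in its chain of equalities, whereas you isolate and prove the key fact behind it (the initial-segment property: color equality at iteration $l$ propagates down to all earlier iterations via injectivity of the WL update, so the path length is $2(L-l^*)$); this is a reasonable expository elaboration but not a different argument.
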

\begin{proof}
\begin{align*}
    d_\text{WWL}(G, H) &\coloneqq \min_{P \in \Gamma_\text{WWL}} \sum_{v_i \in V_G} \sum_{u_j \in V_H} P_{i, j} d_\text{ham}(v_i, u_j)\\
    &= \min_{P \in \Gamma_\text{WWL}} \sum_{v_i \in V_G} \sum_{u_j \in V_H} P_{i, j} \frac{1}{L+1} \sum_{l=0}^{L} \mathbbm{1}_{c_v^l \neq c_u^l}\\
    &= \min_{P \in \Gamma_\text{WWL}} \sum_{v_i \in V_G} \sum_{u_j \in V_H} P_{i, j} d_{\text{path}}\left(c_{v_i}^{(L)}, c_{u_j}^{(L)}; w \equiv \frac{1}{2(L+1)}\right)\\
    &= \min_{P \in \dot{\Gamma}} \sum_{v_i \in V_G} \sum_{u_j \in V_H} P_{i, j} d_{\text{path}}\left(c_{v_i}^{(L)}, c_{u_j}^{(L)}; w \equiv \frac{1}{2(L+1)}\right)\\
    &= \dot{d}_\text{WILT}\left(G, H; w \equiv \frac{1}{2(L+1)}\right).
\end{align*}    
\end{proof}

\begin{theorem}[$d_\text{WLOA}$ as A Special Case of $\bar{d}_\text{WILT}$]
\label{thm:wloa_wilt}
The WLOA distance in Definition~\ref{prop:dwloa} is equal to the WILTing distance with dummy node normalization with all WILT edge weights set to $\frac{1}{2}$.
Specifically,
\begin{equation*}
    d_\text{WLOA}(G, H) = \bar{d}_\text{WILT}\left(G, H; w \equiv \frac{1}{2}\right).
\end{equation*}
\end{theorem}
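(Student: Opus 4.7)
The plan is to evaluate both sides on a pair-by-pair basis and then match the combinatorial optimizations. First, I would use the fact that $\bar{\Gamma}$ is the Birkhoff polytope (since $|\bar V_G|=|\bar V_H|=N$ and marginals are $\mathbf 1$), so the linear cost $\sum_{i,j} P_{i,j} d_{\text{path}}(\bar c_{v_i}^{(L)},\bar c_{u_j}^{(L)})$ attains its minimum at a permutation matrix. Hence
\[
\bar d_\text{WILT}(G,H;\tfrac12) \;=\; \min_{\sigma:\bar V_G\to\bar V_H \text{ bijection}} \sum_{v\in \bar V_G} d_{\text{path}}\!\left(\bar c_v^{(L)},\bar c_{\sigma(v)}^{(L)};\,\tfrac12\right).
\]

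Next, I would compute the path distance on the WILT for a single pair $(v,u)$. By the injectivity of AGG/UPD the WILT is a tree in which the unique parent of $c_v^{(l)}$ is $c_v^{(l-1)}$, and the initial colors are the children of the root $r$. Let $l^\star(v,u)\in\{-1,0,\dots,L\}$ be the largest iteration at which $c_v^{(l)}=c_u^{(l)}$ (with $l^\star=-1$ if no such iteration exists; in that case the path passes through $r$). Then the unique path between $c_v^{(L)}$ and $c_u^{(L)}$ has exactly $2(L-l^\star)$ edges, so with $w\equiv 1/2$,
\[
d_\text{path}(c_v^{(L)},c_u^{(L)};\tfrac12) \;=\; L-l^\star \;=\; (L+1)-\sum_{l=0}^{L}\mathbbm{1}_{c_v^{(l)}=c_u^{(l)}} \;=\; (L+1)-k(v,u).
\]
The same analysis handles the three dummy cases: dummy-to-dummy gives $0$ (both share all iterates); exactly-one-dummy gives $L+1$ (the path must cross $r$, since the dummy attribute is fresh); and in both these cases the WLOA base kernel returns $0$, consistent with the identity $d_\text{path}=(L+1)-k$ that I would establish uniformly.

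I would then rewrite the objective. Assume WLOG $|V_G|\ge|V_H|$. For a bijection $\sigma$ with $P_\sigma$ non-dummy/non-dummy pairings, the number of single-dummy pairings is $|V_G|+|V_H|-2P_\sigma$, the dummy/dummy pairings contribute $0$, and
\[
\sum_{v} d_\text{path}\!\left(\bar c_v^{(L)},\bar c_{\sigma(v)}^{(L)};\tfrac12\right) \;=\; (L+1)(|V_G|+|V_H|-P_\sigma) - \!\!\!\sum_{\substack{(v,u)\in\sigma\\ \text{non-dummy}}}\!\!\! k(v,u).
\]
A one-line swapping argument (replacing one dummy/dummy pair plus one non-dummy/non-dummy pair by two single-dummy pairs increases the cost by $(L+1)+k(v,u)\ge 0$) shows the optimum is attained with $P_\sigma$ maximal, i.e.\ $P_\sigma=|V_H|=\min(|V_G|,|V_H|)$; the leftover $N-\max(|V_G|,|V_H|)$ dummies on each side pair up among themselves and contribute $0$, which is why it is safe that $\bar d_\text{WILT}$ uses $N$ rather than $\max(|V_G|,|V_H|)$.

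Finally, among the bijections with $P_\sigma=\min(|V_G|,|V_H|)$, minimizing the cost is the same as maximizing $\sum k(v,u)$ over partial matchings of size $\min(|V_G|,|V_H|)$ between $V_G$ and $V_H$, which is exactly $k_\text{WLOA}(G,H)$ (the WLOA bijection over $V'_G,V'_H$ contributes $0$ whenever $z$ is involved). Substituting yields
\[
\bar d_\text{WILT}(G,H;\tfrac12)=(L+1)\max(|V_G|,|V_H|)-k_\text{WLOA}(G,H)=d_\text{WLOA}(G,H),
\]
as claimed. The main obstacle I expect is the careful bookkeeping in the swapping argument that justifies forcing $P_\sigma$ to be maximal despite the weighted term $k(v,u)$; this is where the inequality $k(v,u)\le L+1$ is used and needs to be stated cleanly.
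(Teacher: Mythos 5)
Your proof is correct and in essence matches the paper's: both hinge on the same key observation that for non-dummy nodes the WILT path cost with $w\equiv\tfrac12$ equals $(L+1)-k(v,u)$ (via the ``largest iteration of color agreement'' argument), and both reduce the optimal transport to an assignment problem via integrality. The main difference is in the bookkeeping for dummy nodes. The paper works with the asymmetric extensions $V'_G, V'_H$ of sizes $\max(|V_G|,|V_H|)$ (so only the smaller graph receives dummies, dummy/dummy pairings never arise, and the identity $L+1-k=d_\text{path}$ holds uniformly), then finishes with a rather terse step $\star$ asserting that padding both sides up to $N$ with extra dummies changes nothing. You instead work directly with $\bar V_G, \bar V_H$ of size $N$, so you must account for dummy/dummy pairings, and your explicit swapping argument (cost change $(L+1)+k(v,u)\ge 0$) supplies a justification that the paper only gestures at; this is a genuine strengthening of that step. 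One minor inaccuracy to fix: you assert that the identity $d_\text{path}=(L+1)-k$ holds ``uniformly'' including the dummy/dummy case, but there $d_\text{path}=0$ while $(L+1)-k=L+1$, so the identity fails; it is exactly because you (correctly) charge dummy/dummy pairings a cost of $0$ rather than $L+1$ in the cost decomposition that the rest of the argument works, so the prose claim should be dropped even though the subsequent computation is right.
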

\begin{proof}
First, $d_\text{WLOA}(G, H)$ can be transformed as follows.
\begin{align*}
    d_\text{WLOA}(G, H) &\coloneqq (L+1) \cdot \max(|V_G|, |V_H|) - k_\text{WLOA}(G, H)\\
    &= (L+1) \cdot \max(|V_G|, |V_H|) - \max_{B \in \cB(V'_G, V'_H)} \sum_{(v_G, u_H) \in B} k(v_G, u_H)\\
    &= \min_{B \in \cB(V'_G, V'_H)} \sum_{(v_G, u_H) \in B} \left(L+1 - k(v_G, u_H)\right)
\end{align*}
Next, we introduce an equivalent definition of $k(v, u)$. 
In Definition~\ref{def:wloa}, the WL algorithm is applied only on $V_G$ and $V_H$, not on special nodes.
Assume w.l.o.g. that $|V(G)| \leq |V(H)|$, i.e., $V(G)$ is extended with $|V(H)|-|V(G)|$ dummy nodes. 
By treating the special nodes in $V'_G$ as dummy nodes, we can define WL colors for the special nodes $z$: $(c_z^{(0)}, c_z^{(1)}, \ldots, c_z^{(L)}) = (c_\neg^{0}, c_\neg^{1}, \ldots, c_\neg^L)$.
Then, as only $V'_G$ contains special nodes, $k(v, u)$ can be simplified to:
\begin{equation*}
    k(v, u) = \sum_{l=0}^{L} \mathbbm{1}_{\bar{c}_v^{(l)} = \bar{c}_u^{(l)}},
\end{equation*}
where $\bar{c}_{v}^{(l)}$ is the color of node $v$ on the WILT $\bar{T}_\cD$ with dummy node normalization after $l$ iterations.
Then, $L+1 - k(v, u)$ is equivalent to $d_\text{path}(\bar{c}_v^{(L)}, \bar{c}_u^{(L)}; w \equiv \frac{1}{2})$:
\begin{align*}
    L+1 - k(v, u) &= L + 1 - \sum_{l=0}^{L} \mathbbm{1}_{\bar{c}_v^{(l)} = \bar{c}_u^{(l)}}\\
    &= \sum_{l=0}^{L} \mathbbm{1}_{\bar{c}_v^{(l)} \neq \bar{c}_u^{(l)}}\\
    &= d_\text{path}\left(\bar{c}_v^{(L)}, \bar{c}_u^{(L)}; w \equiv \frac{1}{2}\right)
\end{align*}
Therefore, $d_\text{WLOA}$ is a special case of $\bar{d}_\text{WILT}$:
\begin{align*}
    d_\text{WLOA}(G, H) &= \min_{B \in \cB(V'_G, V'_H)} \sum_{(v_G, u_H) \in B} \left(L+1 - k(v_G, u_H)\right) \\
    &= \min_{B \in \cB(V'_G, V'_H)} \sum_{(v_G, u_H) \in B} d_\text{path}\left(\bar{c}_{v_G}^{(L)}, \bar{c}_{u_H}^{(L)}; w \equiv \frac{1}{2}\right)\\
    &\overset{\star}{=} \min_{P \in \bar{\Gamma}} \sum_{v_i \in \bar{V}_G}\sum_{u_j \in \bar{V}_H} P_{i,j} d_{\text{path}}\left(\bar{c}_{v_i}^{(L)}, \bar{c}_{u_j}^{(L)}; w \equiv \frac{1}{2}\right)\\
    &= \bar{d}_\text{WILT}\left(G, H; w \equiv \frac{1}{2}\right)
\end{align*}
Note that $\star$ holds since adding the same number of dummy nodes to both $G$ and $H$ does not change the left side, and the optimal transport on WILT always delivers a mass on a node to only one node.
\end{proof}

\subsection{Expressiveness of Graph Pseudometrics}
\label{app:theory:exp}
We now discuss in detail the expressiveness of graph pseudometrics, which was summarized in Section~\ref{sec:wilt:exp}.
We split Theorem~\ref{thm:exp} in Section~\ref{sec:wilt:exp} into three theorems below, and prove each one separately.
The discussion below provides a possible explanation for some results in Section~\ref{sec:exp} and Appendices~\ref{app:exp_struc} and \ref{app:exp_wilt}.
First, we introduce a pseudometric defined by the WL test:
\begin{equation*}
    d_\text{WL}(G, H) \coloneqq \mathbbm{1}_{\{\!\!\{c_v^{(L)} \mid v \in V_G\}\!\!\} \neq \{\!\!\{c_v^{(L)} \mid v \in V_H\}\!\!\}},
\end{equation*}
where $L$ is the number of WL iterations. 
In other words, $d_\text{WL}(G, H) = 1$ if the $L$-iteration WL test can distinguish $G$ and $H$, otherwise 0.
With this definition, we start with the comparison of $d_\text{WILT}$ and $d_\text{WL}$ for a better understanding of $d_\text{WILT}$.

\begin{theorem}[Expressiveness of the WILTing Distance]
    \label{thm:express_wilt}
    Suppose $\dot{d}_\text{WILT}$ and $\bar{d}_\text{WILT}$ are pseudometrics defined with WILT with some edge weight functions. We assume that all edge weights are positive for $\bar{d}_\text{WILT}$. Then, 
    \begin{equation*}
        \dot{d}_\text{WILT} < \bar{d}_\text{WILT} \cong d_\text{WL}.
    \end{equation*}
\end{theorem}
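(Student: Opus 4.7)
The plan is to reduce both halves of the theorem to the closed-form expression for $\bar d_\text{WILT}$ coming from Proposition~\ref{prop:fast_wilting} together with its size- and dummy-normalized variants in Definitions~\ref{def:sizenorm} and \ref{def:dummynorm}, and then compare color-count vectors on the WILT. Because every edge weight on $\bar T_\cD$ is strictly positive, $\bar d_\text{WILT}(G,H)=0$ iff $\bar\nu^G = \bar\nu^H$; the proof therefore reduces to understanding when this equality of multisets occurs.

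For $\bar d_\text{WILT} \cong d_\text{WL}$ I will argue both inclusions. The direction $d_\text{WL}(G,H)=0 \Rightarrow \bar d_\text{WILT}(G,H)=0$ uses the defining property of the WILT: each color $c_v^{(l)}$ is the unique parent of $c_v^{(l+1)}$, so equality of the level-$L$ color multisets propagates to equality of the level-$l$ multisets for every $0 \le l \le L$. Summing the level-$0$ counts additionally gives $|V_G|=|V_H|$, so the dummy-node coordinates also agree and $\bar\nu^G = \bar\nu^H$. The reverse direction is immediate, because $\bar\nu^G = \bar\nu^H$ in particular equates the level-$L$ coordinates, which is exactly the condition for $d_\text{WL}(G,H) = 0$.

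For $\bar d_\text{WILT} > \dot d_\text{WILT}$, the inequality $\bar d_\text{WILT} \ge \dot d_\text{WILT}$ reuses the previous step: if $\bar d_\text{WILT}(G,H) = 0$ then $\nu^G = \nu^H$ with $|V_G|=|V_H|$, so $\dot\nu^G = \dot\nu^H$ and $\dot d_\text{WILT}(G,H) = 0$, whatever weights $\dot d_\text{WILT}$ uses. Strictness is witnessed by $G = $ triangle and $H = $ the disjoint union of two triangles (all sharing the same node and edge attributes). Vertex-transitivity forces every vertex of $G$ to receive the same color at every WL iteration, and similarly for $H$, so the color support of the two graphs coincides with $\nu^H = 2\nu^G$; dividing by $|V_G|=3$ and $|V_H|=6$ yields $\dot\nu^G = \dot\nu^H$ and therefore $\dot d_\text{WILT}(G,H)=0$. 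After padding with dummy nodes to $N \ge 6$ vertices, however, the dummy-color coordinates of $\bar\nu^G$ and $\bar\nu^H$ differ by $3$, so positivity of the dummy-branch weights gives $\bar d_\text{WILT}(G,H) > 0$.

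The main subtlety is the direction of refinement on the WILT: identical level-$L$ color multisets force identical level-$l$ multisets for every $l < L$, because the WILT edges encode exactly the parent map of the WL refinement. Without invoking this, one cannot upgrade equality of the level-$L$ multisets to equality of the full vector $\bar\nu$, which is what powers both halves of the theorem; the remaining steps are bookkeeping given the closed forms.
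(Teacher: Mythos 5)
Your proof is correct and follows essentially the same route as the paper's: both reduce $\bar d_\text{WILT}(G,H)=0$ to $\bar\nu^G=\bar\nu^H$ via positivity, use the parent-map structure of the WILT to move between the level-$L$ multiset equality and agreement of all coordinates (and hence $|V_G|=|V_H|$), and witness strictness of $\dot d_\text{WILT} < \bar d_\text{WILT}$ with regular graphs of different sizes that 1-WL cannot separate (the paper uses $k$-regular graphs such as cycles; your triangle versus two disjoint triangles is an instance of the same construction). Your write-up is somewhat more explicit than the paper about the level-by-level propagation step, which the paper's chain of biconditionals leaves implicit, but the underlying argument is identical.
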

\begin{proof}
    We first show $\dot{d}_\text{WILT} \le \bar{d}_\text{WILT}$.
    \begin{align*}
        \bar{d}_\text{WILT}(G, H) = 0 &\implies \bar{\nu}^G = \bar{\nu}^H \quad \land \quad |V_G| = |V_H|\\
        &\implies \forall \text{ leaf color } c: \quad |\{v \in V_G \mid c_v^{(L)} = c\}| = |\{v \in V_H \mid c_v^{(L)} = c\}| \quad \land \quad |V_G| = |V_H|\\
        &\implies \forall \text{ leaf color } c: \quad \frac{|\{v \in V_G \mid c_v^{(L)} = c\}|}{|V_G|} = \frac{|\{v \in V_H \mid c_v^{(L)} = c\}|}{|V_H|}\\
        &\implies \dot{\nu}^G = \dot{\nu}^H\\
        &\implies \dot{d}_\text{WILT}(G, H) = 0.
    \end{align*}
    Note that leaf color $c$ means that $c$ is a leaf of the WILT.
    The first implication follows from the fact that dummy node normalization implies that only graphs with identical numbers of nodes can have a distance of zero if the weights are positive.
    To see that $\bar{d}_\text{WILT}(G, H)$ is more expressive than $\dot{d}_\text{WILT}(G, H)$, note that there are $G$ and $H$ s.t. $\bar{d}_\text{WILT}(G, H) \neq 0 \land \dot{d}_\text{WILT}(G, H) = 0$: for example, let $G$ and $H$ be $k$-regular graphs (such as cycles) with different numbers of nodes and identical node and edge attributes. 
    Next, we show $\bar{d}_\text{WILT} = d_\text{WL}$.
    \begin{align*}
        \bar{d}_\text{WILT}(G, H) = 0 &\iff \bar{\nu}^G = \bar{\nu}^H\\
        &\iff \forall \text{ leaf color } c: \quad |\{v \in V_G \mid c_v^{(L)} = c\}| = |\{v \in V_H \mid c_v^{(L)} = c\}|\\
        &\iff \{\!\!\{c_v^{(L)} \mid v \in V_G\}\!\!\} = \{\!\!\{c_v^{(L)} \mid v \in V_H\}\!\!\}\\
        &\iff d_\text{WL}(G, H) = 0.
    \end{align*}
    The first equivalence again follows from the fact that weights are positive.
\end{proof}

Since $d_\text{MPNN} \le d_\text{WL}$ holds for any MPNN \citep{gin}, the above theorem implies that $d_\text{MPNN} \le \bar{d}_\text{WILT}$ if all edge weights are positive.
At first glance, this seems to suggest that $\bar{d}_\text{WILT}$ can better align with $d_\text{MPNN}$ of any MPNN than $\dot{d}_\text{WILT}$ because of its high expressiveness.
However, the results in Section~\ref{sec:exp} and Appendix~\ref{app:exp_wilt} show that $\dot{d}_\text{WILT}$ is suitable for MPNNs with mean pooling, while $\bar{d}_\text{WILT}$ is suitable for MPNNs with sum pooling.
Next, we compare $d_\text{MPNN}$ and $d_\text{WILT}$ in more detail to interpret these results.
We start with MPNNs with mean pooling, whose pseudometrics we will call $d_\text{MPNN}^\text{mean}$.

\begin{theorem}[Expressiveness of the Pseudometric of MPNN with Mean Pooling]
    Suppose $\dot{d}_\text{WILT}$ and $\bar{d}_\text{WILT}$ are pseudometrics defined with WILT with some edge weight functions. We assume that all edge weights are positive. Then, 
    \begin{equation*}
        d_\text{MPNN}^\text{mean} \le \dot{d}_\text{WILT} (< \bar{d}_\text{WILT}).
    \end{equation*}
\end{theorem}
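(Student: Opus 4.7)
The plan is to prove the new statement $d_\text{MPNN}^\text{mean} \le \dot{d}_\text{WILT}$, since the parenthetical part $\dot{d}_\text{WILT} < \bar{d}_\text{WILT}$ is already established in Theorem~\ref{thm:express_wilt}. By definition of expressiveness of pseudometrics, it suffices to show that $\dot{d}_\text{WILT}(G,H)=0$ implies $d_\text{MPNN}^\text{mean}(G,H)=0$ for all $G,H\in\cG$.

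First I would invoke the well-known fact that an $L$-layer MPNN is upper-bounded in expressiveness by the $L$-iteration WL test \citep{gin}. Concretely, this means that the final node embedding $h_v^{(L)}$ depends on $v$ only through its WL color $c_v^{(L)}$, so there exists a function $\phi$ with $h_v^{(L)} = \phi(c_v^{(L)})$ for every node $v$ in every graph in $\cG$. Using mean pooling, the graph embedding can then be rewritten as a color-weighted average:
\begin{equation*}
    h_G \;=\; \frac{1}{|V_G|}\sum_{v\in V_G} \phi(c_v^{(L)}) \;=\; \sum_{c} \frac{|\{v\in V_G : c_v^{(L)}=c\}|}{|V_G|}\,\phi(c),
\end{equation*}
where the sum ranges over leaf colors $c$ of $T_\cD$ (i.e.\ colors at iteration $L$).

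Next I would unpack the hypothesis $\dot{d}_\text{WILT}(G,H)=0$. Since all WILT edge weights are positive, the characterization in Proposition~\ref{prop:fast_wilting} forces $\dot{\nu}^G_c = \dot{\nu}^H_c$ for every color $c\in V(T_\cD)\setminus\{r\}$. Restricting this equality to leaf colors $c$ yields
\begin{equation*}
    \frac{|\{v\in V_G : c_v^{(L)}=c\}|}{|V_G|} \;=\; \frac{|\{v\in V_H : c_v^{(L)}=c\}|}{|V_H|}
\end{equation*}
for each leaf color. Substituting into the expression for $h_G$ and $h_H$ above immediately gives $h_G=h_H$, hence $d_\text{MPNN}^\text{mean}(G,H)=\|h_G-h_H\|_2=0$, as required.

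The proof is essentially a bookkeeping argument; the only conceptual step is recognizing that mean pooling and size normalization both factor through the leaf-color frequency vector $\dot{\nu}^G$, so there is no real obstacle. I would close by noting that $\dot{d}_\text{WILT} < \bar{d}_\text{WILT}$ is already contained in Theorem~\ref{thm:express_wilt}, which completes the chain $d_\text{MPNN}^\text{mean} \le \dot{d}_\text{WILT} < \bar{d}_\text{WILT}$.
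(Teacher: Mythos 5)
Your proof is correct and follows essentially the same route as the paper's: both use positivity of edge weights to deduce $\dot{\nu}^G=\dot{\nu}^H$, invoke the WL upper bound on MPNN expressiveness (the existence of your $\phi$, i.e.\ that $c_u^{(L)}=c_v^{(L)}$ forces $h_u^{(L)}=h_v^{(L)}$), regroup the mean-pooled sum by leaf color, and defer $\dot{d}_\text{WILT}<\bar{d}_\text{WILT}$ to the earlier theorem. The only cosmetic difference is that you make the function $\phi$ explicit and write $h_G$ directly as a color-frequency-weighted average, whereas the paper keeps the sum indexed by vertices and partitions it by color.
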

\begin{proof}
    We first show the left inequality.
    \begin{align*}
        \dot{d}_\text{WILT}(G, H) = 0 & \implies \dot{\nu}^G = \dot{\nu}^H\\
        &\implies \forall \text{ leaf color } c: \quad \frac{|\{v \in V_G \mid c_v^{(L)} = c\}|}{|V_G|} = \frac{|\{v \in V_H \mid c_v^{(L)} = c\}|}{|V_H|}\\
        &\implies \forall \text{ leaf color } c: \quad \frac{1}{|V_G|} \sum_{v \in V_G: c_v^{(L)} = c} h_v^{(L)} = \frac{1}{|V_H|} \sum_{v \in V_H: c_v^{(L)} = c} h_v^{(L)}\\
        &\implies \frac{1}{|V_G|} \sum_{v \in V_G} h_v^{(L)} = \frac{1}{|V_H|} \sum_{v \in V_H} h_v^{(L)}\\
        &\implies d_\text{MPNN}^\text{mean}(G, H) = 0.
    \end{align*}
    The first implication follows from the fact that $w(e_{\{c, p(c)\}}) > 0$ for all colors.
    The third implication follows from \citet{gin} by noting that $c_u^{(L)} = c_v^{(L)} \implies h_u^{(L)} = h_v^{(L)}$ for any MPNN.
    $\dot{d}_\text{WILT} < \bar{d}_\text{WILT}$ follows from Theorem~\ref{thm:express_wilt}.
\end{proof}

In Section~\ref{sec:exp} and Appendix~\ref{app:exp_wilt}, we show that RMSE($d_\text{MPNN}^\text{mean}$, $\dot{d}_\text{WILT}$) is smaller than RMSE($d_\text{MPNN}^\text{mean}$, $\bar{d}_\text{WILT}$).
The above theorem and the proof yield an interpretation of the result.
In terms of expressiveness, $\dot{d}_\text{WILT}$ is a stricter upper bound on $d_\text{MPNN}^\text{mean}$ than $\bar{d}_\text{WILT}$, since the mean pooling and the size normalization are essentially the same procedure. 
Both ignore the information about the number of nodes in graphs.
When we try to fit $\bar{d}_\text{WILT}$ to $d_\text{MPNN}^\text{mean}$, it is difficult to tune edge parameters so that $\bar{d}_\text{WILT}$ can ignore the number of nodes in graphs, but $\dot{d}_\text{WILT}$ satisfies this property by definition.
This may be the reason why $\dot{d}_\text{WILT}$ can be trained to be better aligned with $d_\text{MPNN}^\text{mean}$ than $\bar{d}_\text{WILT}$.
A similar discussion can be applied to $d_\text{WWL}$ and $d_\text{WLOA}$, which are special cases of  $\dot{d}_\text{WILT}$ and $\bar{d}_\text{WILT}$, respectively.
Next, we analyze MPNNs with sum pooling. 

\begin{theorem}[Expressiveness of the Pseudometric of MPNN with Sum Pooling]
    Suppose $\bar{d}_\text{WILT}$ is defined with WILT with an edge weight function that assigns a positive value to all edges. Then, 
    \begin{equation*}
        d_\text{MPNN}^\text{sum} \le \bar{d}_\text{WILT}.
    \end{equation*}
    In addition, if $\exists G \in \cG$ s.t. $\sum_{v \in V_G} h_v^{(L)} \neq 0$, then
    \begin{equation*}
        d_\text{MPNN}^\text{sum} \bcancel{\le} \dot{d}_\text{WILT}
    \end{equation*}
\end{theorem}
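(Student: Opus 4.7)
The plan is to prove the two inequalities separately. For $d_\text{MPNN}^\text{sum} \le \bar{d}_\text{WILT}$, I would chain the equivalence $\bar{d}_\text{WILT} \cong d_\text{WL}$ from Theorem~\ref{thm:express_wilt} with the classical fact that MPNN node embeddings refine WL colors, i.e., $c_u^{(L)} = c_v^{(L)} \implies h_u^{(L)} = h_v^{(L)}$ \citep{gin}. Specifically, $\bar{d}_\text{WILT}(G,H) = 0$ implies $\{\!\!\{c_v^{(L)} \mid v \in V_G\}\!\!\} = \{\!\!\{c_v^{(L)} \mid v \in V_H\}\!\!\}$, which yields a bijection $\pi : V_G \to V_H$ with $c_v^{(L)} = c_{\pi(v)}^{(L)}$, and consequently $h_v^{(L)} = h_{\pi(v)}^{(L)}$ for every $v \in V_G$. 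Summing over $V_G$ gives $\sum_{v \in V_G} h_v^{(L)} = \sum_{u \in V_H} h_u^{(L)}$, so $d_\text{MPNN}^\text{sum}(G,H) = 0$. This mirrors the mean-pooling proof just above, only without dividing by $|V_G|$.

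For the non-containment $d_\text{MPNN}^\text{sum} \bcancel{\le} \dot{d}_\text{WILT}$, I would exhibit an explicit pair on which $\dot{d}_\text{WILT}$ vanishes while $d_\text{MPNN}^\text{sum}$ does not. Take a graph $G \in \cG$ with $s := \sum_{v \in V_G} h_v^{(L)} \neq 0$, whose existence is guaranteed by hypothesis, and let $H \coloneqq G \sqcup G$ be the disjoint union of two isomorphic copies of $G$. Since WL colors are determined by local neighborhoods and the two components share no edges, each node in the second copy receives the same color at every iteration as its counterpart in the first copy. Hence $\nu_c^H = 2 \nu_c^G$ for every $c$, and therefore $\dot{\nu}^H = \nu^H / |V_H| = (2\nu^G)/(2|V_G|) = \dot{\nu}^G$. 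The Manhattan characterization of $\dot{d}_\text{WILT}$ from Definition~\ref{def:sizenorm} then gives $\dot{d}_\text{WILT}(G,H) = 0$. On the other hand, by the same component-wise argument $\sum_{u \in V_H} h_u^{(L)} = 2s \neq s$, so the sum-pooled graph embeddings of $G$ and $H$ differ and $d_\text{MPNN}^\text{sum}(G,H) > 0$. Note that $G$ and $H$ are non-isomorphic because $|V_H| = 2|V_G|$, so they are genuinely distinct elements of $\cG$.

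The only obstacle worth flagging is justifying that the WL trace on $G \sqcup G$ is the duplicated WL trace on $G$. This is essentially definitional: the aggregation in Section~\ref{sec:prelim} uses only $\cN(v)$, and nodes in different connected components have disjoint neighborhoods, so induction on the iteration count immediately yields $c_v^{(l)} = c_{v'}^{(l)}$ whenever $v'$ is the copy of $v$. Everything else in the argument is a straightforward comparison of node-count sums versus normalized ratios, exactly paralleling the style of the preceding proofs in Appendix~\ref{app:theory:exp}.
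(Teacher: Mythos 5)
Your proof is correct and follows essentially the same approach as the paper: for the first inequality you reroute through the equivalence $\bar{d}_\text{WILT} \cong d_\text{WL}$ and a color-preserving bijection instead of summing node embeddings color by color, but the content is identical, and for the second inequality you use exactly the paper's counterexample $H = G \sqcup G$, merely adding the (correct and welcome) explicit justifications that the WL trace on a disjoint union factors component-wise and that $|V_H| = 2|V_G|$ makes $G$ and $H$ non-isomorphic.
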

\begin{proof}
    We begin with $d_\text{MPNN}^\text{sum} \le \bar{d}_\text{WILT}$.
    \begin{align*}
        \bar{d}_\text{WILT}(G, H) = 0 &\implies \bar{\nu}^G = \bar{\nu}^H\\
        &\implies \forall \text{ leaf color } c: \quad {|\{v \in V_G \mid c_v^{(L)} = c\}|} = {|\{v \in V_H \mid c_v^{(L)} = c\}|}\\
        &\implies \forall \text{ leaf color } c: \quad \sum_{v \in V_G: c_v^{(L)} = c} h_v^{(L)} = \sum_{v \in V_H: c_v^{(L)} = c} h_v^{(L)}\\
        &\implies \sum_{v \in V_G} h_v^{(L)} = \sum_{v \in V_H} h_v^{(L)}\\
        &\implies d_\text{MPNN}^\text{sum}(G, H) = 0.
    \end{align*}
    Next, we show $d_\text{MPNN}^\text{sum} \bcancel{\le} \dot{d}_\text{WILT}$. 
    Let $G$ be a graph that satisfies $\sum_{v \in V_G} h_v^{(L)} \neq 0$. 
    We can consider a graph $H$ that consists of two copies of $G$. 
    Then, $\dot{\nu}^G = \dot{\nu}^H$, since $2\nu^G = \nu^H$ and $2|V_G| = |V_H|$. 
    Therefore, $\dot{d}_\text{WILT}(G, H) = 0$.
    On the other hand, 
    \begin{align*}
        d_\text{MPNN}^\text{sum}(G, H) &= \left\| \sum_{v \in V_G} h_v^{(L)} - \sum_{v \in V_H} h_v^{(L)}\right\|_2\\
        &= \left\|\sum_{v \in V_G} h_v^{(L)} - 2\sum_{v \in V_G} h_v^{(L)}\right\|_2\\
        &= \left\|\sum_{v \in V_G} h_v^{(L)}\right\|_2\\
        &\neq 0.
    \end{align*}
\end{proof}

In terms of expressiveness, $d_\text{MPNN}^\text{sum}$ is almost always not bounded by $\dot{d}_\text{WILT}$ except for the trivial MPNN which embeds all graphs to zero. 
In fact, the opposite $\dot{d}_\text{WILT} \le d_\text{MPNN}^\text{sum}$ holds if the MPNN is sufficiently expressive, e.g. GIN.
These analyses may explain why RMSE($d_\text{MPNN}^\text{sum}$, $\bar{d}_\text{WILT}$) is generally smaller than RMSE($d_\text{MPNN}^\text{sum}$, $\dot{d}_\text{WILT}$) in Section~\ref{sec:exp} and Appendix~\ref{app:exp_wilt}.
No matter how much it is trained, $\dot{d}_\text{WILT}$ cannot capture the information about the number of nodes that $d_\text{MPNN}^\text{sum}$ can. 
On the other hand, $\bar{d}_\text{WILT}$ is expressive enough to capture the information, and thus has a chance of aligning well with $d_\text{MPNN}^\text{sum}$.
Again, a similar reasoning can be applied to $d_\text{WWL}$ and $d_\text{WLOA}$.

\section{Algorithm to Construct the WILT}
\label{app:algo}
\begin{algorithm}[t]
\caption{Building WILT}\label{alg:wilt}
\begin{algorithmic}
\STATE {\bfseries Input:} Graph dataset $\cD$
\STATE {\bfseries Parameter:} $L \geq 1$
\STATE {\bfseries Output:} WILT $T_\cD$
\STATE $T_\cD \gets $Initial tree with only the root $r$
\FOR{$G$ {\bfseries in} $\cD$}
\STATE $c_\text{pre} \gets []$ \hfill \SUBCOMMENT{Keeping colors in the previous iteration}
\STATE $c_\text{now} \gets []$ \hfill \SUBCOMMENT{Keeping colors in the current iteration}
\LINECOMMENT{Add initial colors as children of root}
\FOR{$v$ {\bfseries in} $V_G$} %
\IF{$l_\text{node}(v) \notin V(T_\cD)$}
\STATE $V(T_\cD) \gets V(T_\cD) \cup \{l_\text{node}(v)\}$
\STATE $E(T_\cD) \gets E(T_\cD) \cup \{(r, l_\text{node}(v))\}$
\ENDIF
\STATE $c_\text{pre}[v] \gets l_\text{node}(v)$
\ENDFOR
\LINECOMMENT{$L$-iteration WL test on $G$}
\FOR{$l = 1$ {\bfseries to} $L$}
\FOR{$v$ {\bfseries in} $V_G$}
\STATE $c_v \gets \text{HASH}((c_\text{pre}[v], \{\!\!\{ (c_\text{pre}[u], l_\text{edge}(e_{uv})) \mid u \in \cN(v) \}\!\!\}))$ \hfill \SUBCOMMENT{Compute iteration $l$ WL color}
\LINECOMMENT{Add new colors to WILT}
\IF{$c_v \notin V(T_\cD)$} %
\STATE $V(T_\cD) \gets V(T_\cD) \cup \{c_v\}$
\STATE $E(T_\cD) \gets E(T_\cD) \cup \{(c_\text{pre}[v], c_v)\}$
\ENDIF
\STATE $c_\text{now}[v] \gets c_v$
\ENDFOR
\STATE $c_\text{pre} \gets c_\text{now}$
\STATE $c_\text{now} \gets []$
\ENDFOR
\ENDFOR
\STATE {\bfseries return} $T_\cD$
\end{algorithmic}
\end{algorithm}

Algorithm~\ref{alg:wilt} shows how to build the WILT of a graph dataset $\cD$.
It starts with the initialization of a root node of the WILT and adds each node color appearing in $\cD$ as child of the root.
The algorithm then runs $L$ iterations of the Weisfeiler Leman algorithm.
Whenever a new vertex color $c_v$ is encountered at some node $v$ in iteration $l$, it is added to the WILT $T_\cD$ as a child of the color of node $v$ in iteration $l-1$.

\section{Experimental Details for Section~\ref{sec:dfunc}}
\label{app:exp_func}
\begin{figure}[t]
    \centering
    \includegraphics[width=0.9\textwidth]{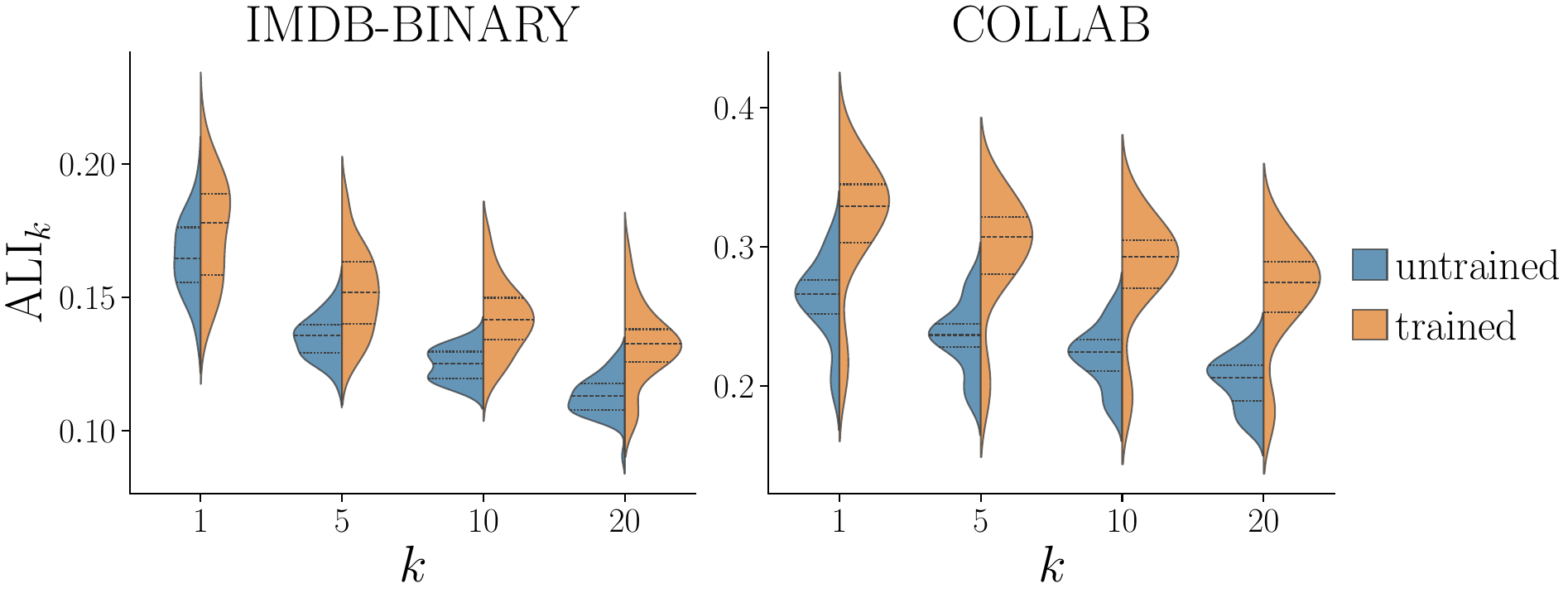}
    \caption{The distribution of $\text{ALI}_k(d_\text{MPNN}, d_\text{func})$ under different $k$ and datasets.}
    \label{fig:catplot:additional}
\end{figure}

\begin{table}[b]
    \centering
    \caption{Correlation (SRC) between $\text{ALI}_k(d_\text{MPNN}, d_\text{func})$ and accuracy on $\cD_\text{train}$ and $\cD_\text{test}$ under different $k$.}
    \label{tab:corr_mk_performance:additional}
    \begin{tabular}{ccccccccc}
        \toprule
        & \multicolumn{4}{c}{IMDB-BINARY} & \multicolumn{4}{c}{COLLAB}\\
        \cmidrule(r){2-5}\cmidrule(r){6-9}
        k & 1 & 5 & 10 & 20 & 1 & 5 & 10 & 20\\
        \midrule
        train & 0.36 & 0.57 & 0.55 & 0.54 & 0.95 & 0.94 & 0.92 & 0.91\\
        test & -0.43 & 0.10 & 0.14 & 0.16 & 0.81 & 0.79 & 0.77 & 0.76\\
        \bottomrule
    \end{tabular}
\end{table}

\begin{figure}[t]
    \centering
    \includegraphics[width=1.0\textwidth]{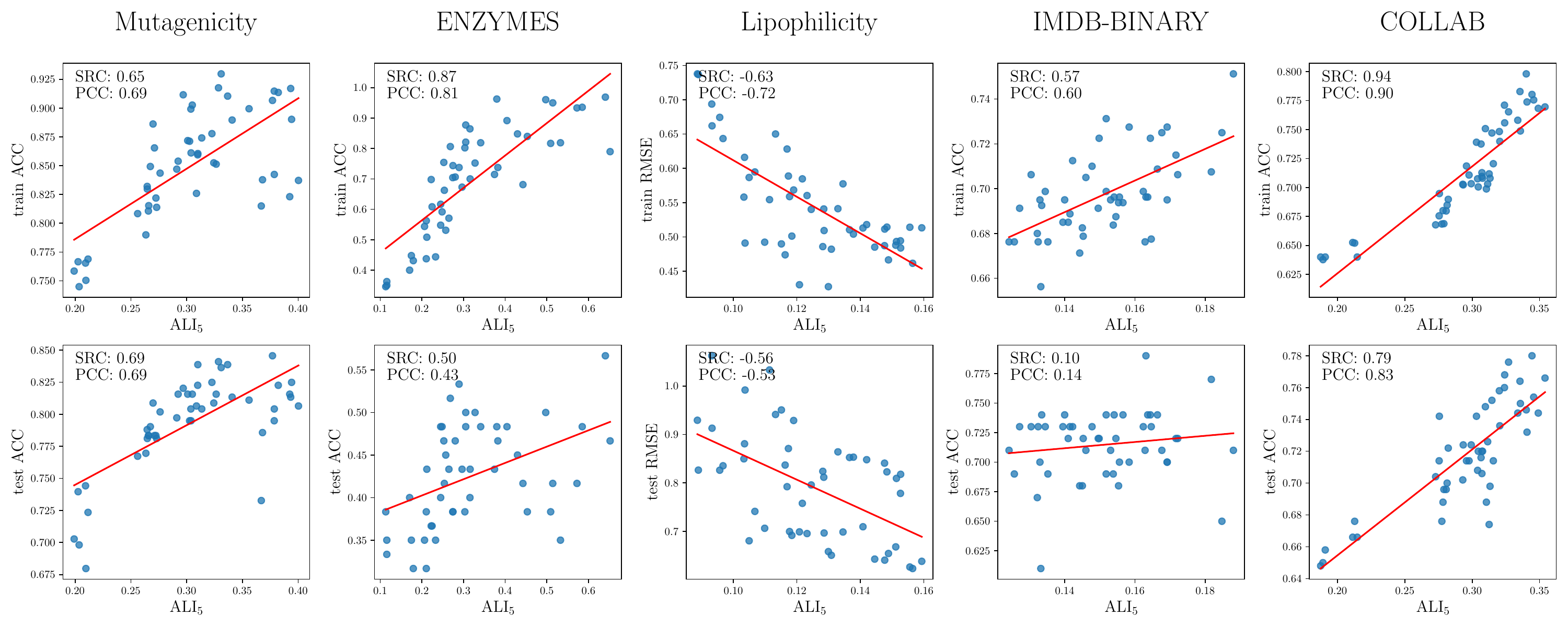}
    \caption{Scatter plots between $\text{ALI}_5(d_\text{MPNN}, d_\text{func})$ and the performance on the train/test set.
    SRC and PCC stand for the Spearman's rank correlation coefficient and the Pearson's correlation coefficient, respectively.
    In general, higher $\text{ALI}_5(d_\text{MPNN}, d_\text{func})$, i.e., higher alignment between $d_\text{MPNN}$ and $d_\text{func}$, indicates higher performance.}
    \label{fig:mk_acc}
\end{figure}

Here, we present the detailed experimental setup for the results in Figure~\ref{fig:catplot} and Table~\ref{tab:corr_mk_performance}.
We conduct experiments on three different datasets: Mutagenicity and ENZYMES \citep{tudataset}, and Lipophilicity \citep{moleculenet}.
We chose these datasets to represent binary classification, multiclass classification, and regression tasks, respectively.
For the models, we adopt two popular MPNN architectures: GCN and GIN.
For each model architecture, we vary the number of message passing layers ($1, 2, 3, 4$), the embedding dimensions ($32, 64, 128$), and the graph pooling methods (mean, sum).
This results in a total of $2 \times 4 \times 3 \times 2 = 48$ different MPNNs for each dataset.
In each setting, we split the dataset into $\cD_\text{train}, \cD_\text{eval}$, and $\cD_\text{test}$ (8:1:1). 
We train the model for 100 epochs and record the performance on $\cD_\text{eval}$ after each epoch. 
We set the batch size to 32, and use the Adam optimizer with learning rate of $10^{-3}$.
$\text{ALI}_k(d_\text{MPNN}, d_\text{func})$ and the performance metric (accuracy for Mutagenicity and ENZYMES, RMSE for Lipophilicity) are calculated with the model at the epoch that performed best on $\cD_\text{eval}$.
The code to run our experiments is available at \url{https://github.com/masahiro-negishi/wilt}.

\enlargethispage*{1em}

Next, we offer additional experimental results on non-molecular datasets: IMDB-BINARY and COLLAB \citep[obtained from][]{tudataset}.
Figure~\ref{fig:catplot:additional} visualizes the distribution of $\text{ALI}_k(d_\text{MPNN}, d_\text{func})$ on these datasets and varying $k$. 
Similar to Figure~\ref{fig:catplot}, $\text{ALI}_k$ consistently improves with training.
Table~\ref{tab:corr_mk_performance:additional} also offers results similar to Table~\ref{tab:corr_mk_performance}, showing that there is a positive correlation between $\text{ALI}_k$ of trained MPNNs and their accuracy in general.
Figure~\ref{fig:mk_acc} shows the data used to compute the Spearman's rank correlation coefficient (SRC) in Table~\ref{tab:corr_mk_performance} and Table~\ref{tab:corr_mk_performance:additional} for better understanding. 
Each blue dot represents one of the 48 different models.
We also fit a linear function and show Pearson's correlation coefficient (PCC).
For $\text{ALI}_k$ with $k \neq 5$, similar plots were observed.

\section{MPNN Pseudometric and Structural Pseudometrics}
\label{app:exp_struc}
There has been intensive research on graph kernels, which essentially aims to manually design graph pseudometrics $d_\text{struc}$ that lead to good prediction performance. 
Recent studies have theoretically analyzed the relationship between $d_\text{MPNN}$ and such $d_\text{struc}$, but they only upper-bounded $d_\text{MPNN}$ with $d_\text{struc}$ \citep{tmd}, or showed the equivalence for untrained MPNNs on dense graphs \citep{boker2024fine}.
Therefore, this section examines if $d_\text{MPNN}$ really aligns with $d_\text{struc}$ in practice, and if the alignment explains the high performance of MPNNs.
Specifically, we address the following questions:
\begin{description}
    \item[Q1.3] What kind of $d_\text{struc}$ is $d_\text{MPNN}$ best aligned with?
    \item[Q1.4] Does training MPNN increase the alignment?
    \item[Q1.5] Does a strong alignment between $d_\text{MPNN}$ and $d_\text{struc}$ indicate high performance of the MPNN?
\end{description}

We first define an evaluation criterion for the alignment between $d_\text{MPNN}$ and $d_\text{struc}$ to answer them, which is the same as the one used in Section~\ref{sec:exp} and Appendix~\ref{app:exp_wilt}.

\begin{restatable}[Evaluation Criterion for Alignment Between $d_\text{MPNN}$ and $d_\text{struc}$]{definition}{defrmse}
\label{def:rmse}
Consider a graph dataset denoted by $\cD$.
Let $\hat{d}_\text{MPNN}$ and $\hat{d}_\text{struc}$ be normalized versions of $d_\text{MPNN}$ and $d_\text{struc}$, respectively:
\begin{equation*}
    \hat{d}_\text{MPNN}(G, H) \coloneqq \frac{d_\text{MPNN}(G, H)}{\underset{(G',H') \in \cD^2}{\max} d_\text{MPNN}(G', H')}, \quad \hat{d}_\text{struc}(G, H) \coloneqq \frac{d_\text{struc}(G, H)}{\underset{(G',H') \in \cD^2}{\max} d_\text{struc}(G', H')}.\\
\end{equation*}
We measure the alignment between $d_\text{MPNN}$ and $d_\text{struc}$ by the RMSE after fitting a linear model with the intercept fixed at zero to the normalized pseudometrics:
\begin{equation*}
    \text{RMSE}(d_\text{MPNN}, d_\text{struc}) \coloneqq \sqrt{\min_{\alpha \in \bbR} \frac{1}{|\cD|^2} \sum_{(G, H) \in \cD^2} \left(\hat{d}_\text{MPNN}(G, H) - \alpha \cdot \hat{d}_\text{struc}(G, H)\right)^2}.
\end{equation*}
\end{restatable}
The closer the RMSE is to zero, the better the alignment is. 
Zero RMSE means perfect alignment. 
That is, $d_\text{MPNN}$ is a constant multiple of $d_\text{struc}$.
Note that the evaluation criterion is different from $\text{ALI}_k$ (Definition~\ref{def:ali}) for measuring the alignment between $d_\text{MPNN}$ and $d_\text{func}$.
There are multiple reasons for this.
First, the RMSE is in principle designed for non-binary $d_\text{struc}$.
Therefore, RMSE$(d_\text{MPNN}, d_\text{func})$ is not a meaningful value when $d_\text{func}$ is a binary function, which is the case when the task is classification.
Second, the computation of $\text{ALI}_k(d_\text{MPNN}, d_\text{struc})$ is computationally too expensive.
We explain this in terms of how many graph pairs we need to compute the distance for.
Both the RMSE and $\text{ALI}_k$ require the calculation of the distance between $|\cD|^2$ pairs in the original definition.
This is too demanding, especially when $d_\text{struc}$ is $d_\text{GED}$, which is NP-hard to compute.
Therefore, in practice, we approximate the RMSE with 1000 randomly selected pairs from $\cD^2$.
This kind of approximation is difficult for $\text{ALI}_k$.
To approximate $\text{ALI}_k$, we first choose a subset $\cD_\text{sub}$ of $\cD$, and then compute $d_\text{struc}$ of all pairs in $\cD_\text{sub}^2$.
Even if we set $|\cD_\text{sub}|=100$, which is quite small, we still need about 10 times more computation than the RMSE.

We evaluate four structural pseudometrics: graph edit distance \citep[$d_\text{GED}$,][]{ged}, tree mover's distance \citep[$d_\text{TMD}$,][]{tmd}, Weisfeiler Leman optimal assignment distance \citep[$d_\text{WLOA}$,][]{kriege2016wloa}, and Wasserstein Weisfeiler Leman graph distance \citep[$d_\text{WWL}$,][]{wassersteinwl}.
See Appendix~\ref{app:theory:struc} for detailed definitions.
$d_{\text{TMD}}$, $d_\text{WLOA}$, and $d_{\text{WWL}}$ are pseudometrics on the set of pairwise nonisomorphic graphs $\cG$.
Only $d_{\text{GED}}$ for strictly positive edit costs is a metric, i.e., $d_{\text{GED}}(G, H) = 0$ if and only if $G$ and $H$ are isomorphic.
We will also call $d_{\text{GED}}$ a pseudometric for simplicity.
We chose $d_\text{GED}$ because it is a popular graph pseudometric.
The others were chosen because they are based on the message passing algorithm, like MPNNs, and classifiers based on their corresponding kernels were reported to achieve high accuracy.
In addition, $d_\text{TMD}$ has been theoretically proven to be an upper bound of $d_\text{MPNN}$ \citep{tmd}.
Note that the exact calculation of $d_\text{GED}$ is in general NP-hard due to the combinatorial optimization over the set of valid transformation sequences (see Definition~\ref{def:ged}).
Therefore, in our experiment, we limit the computation time of $d_\text{GED}$ of each graph pair $(G, H)$ to a maximum of 30 seconds. 
If this time limit is exceeded, we consider the lowest total cost at that point to be $d_\text{GED}(G, H)$.
When we compute the RMSE between $d_\text{MPNN}$ and any of $d_\text{TMD}$, $d_\text{WLOA}$, and $d_\text{WWL}$, we set the depth of the computational trees used to compute these $d_\text{struc}$ as the number of message passing layers in the MPNN for a fair comparison. 

\begin{figure}[t]
    \centering
    \includegraphics[width=\textwidth]{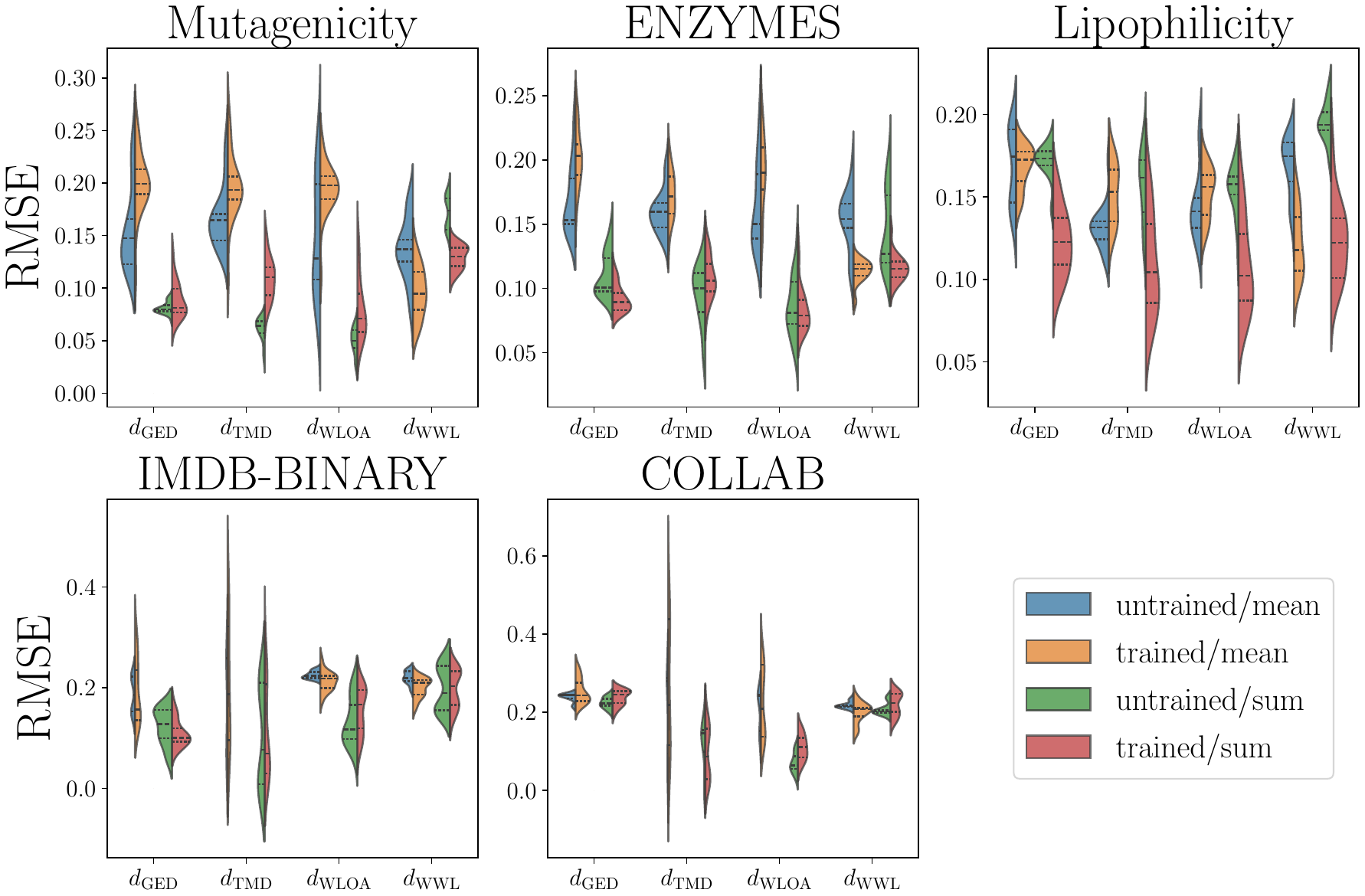}
    \caption{The distributions of $\text{RMSE}(d_\text{MPNN}, d_\text{struc})$ under different $d_\text{struc}$ and datasets. Each color represents whether the MPNNs are trained or not and which graph pooling function they use.}
    \label{fig:rmse}
\end{figure}

\begin{table}[t]
    \centering
    \caption{Correlation (SRC) between $\text{RMSE}(d_\text{MPNN}, d_\text{struc})$ and the performance on the training and test sets. Performance was measured based on accuracy for Mutagenicity and ENZYMES, and based on RMSE for Lipophilicity.}
    \label{tab:corr_rmse_acc}
    \begin{tabular}{cccccccccc}
        \toprule
        & & \multicolumn{4}{c}{Train} & \multicolumn{4}{c}{Test} \\
        \cmidrule(r){3-6}\cmidrule(r){7-10}
        & & GED & TMD & WLOA & WWL & GED & TMD & WLOA & WWL \\
        \midrule
        \multirow{2}{*}{Mutagenicity} & mean & 0.33 & 0.19 & -0.09 & 0.46 & 0.36 & 0.36 & 0.05 & 0.66\\
        & sum & 0.09 & 0.25 & 0.38 & 0.20 & 0.10 & 0.31 & 0.14 & 0.09 \\
        \multirow{2}{*}{ENZYMES} & mean & 0.32 & 0.27 & 0.30 & 0.15 & 0.33 & 0.53 & 0.44 & 0.12\\
        & sum & -0.36 & 0.71 & 0.44 & 0.35 & -0.60 & 0.14 & -0.25 & -0.22\\
        \multirow{2}{*}{Lipophilicity} & mean & -0.57 & -0.61 & -0.58 & -0.46 & -0.55 & -0.77 & -0.65 & -0.65\\
        & sum & -0.12 & -0.62 & -0.54 & -0.32 & -0.53 & -0.85 & -0.85 & -0.64\\
        \multirow{2}{*}{IMDB-BINARY} & mean & 0.04 & 0.10 & -0.53 & -0.20 & -0.30 & -0.31 & -0.39 & 0.41\\
        & sum & 0.37 & 0.65 & -0.54 & -0.51 & 0.01 & 0.22 & -0.23 & 0.02\\
        \multirow{2}{*}{COLLAB} & mean & 0.60 & 0.58 & 0.57 & -0.39 & 0.64 & 0.56 & 0.61 & -0.42\\
        & sum & -0.40 & 0.63 & -0.50 & -0.54 & -0.31 & 0.53 & -0.39 & -0.48\\
        \bottomrule
    \end{tabular}
\end{table}

Figure~\ref{fig:rmse} presents the distributions of the RMSE in different datasets \citep{tudataset,moleculenet}, $d_\text{struc}$, and the readout functions used in MPNN.
We followed exactly the same procedure for training and evaluating MPNNs as shown in Appendix~\ref{app:exp_func}.
Each distribution consists of $\text{RMSE}(d_\text{MPNN}, d_\text{struc})$ of 24 MPNNs with different architectures and hyperparameters.
We also provide results for untrained MPNNs to see the effect of training.
As can be seen from the plots, the distributions of the untrained and trained MPNNs overlap, and there is no strong and consistent improvement in RMSE after training (answer to \textbf{Q1.4}).
Regarding \textbf{Q1.3}, none of the four $d_\text{struc}$ performs best in all cases. 
The best one depends on the choice of dataset and pooling. 
One intersting observation is that \textcolor{red}{$d_\text{MPNN}$} with sum pooling is more aligned with $d_\text{WLOA}$ than $d_\text{WWL}$, while the reverse is true for \textcolor{orange}{$d_\text{MPNN}$} with mean pooling.
This difference between pooling methods can be explained by 
different normalizations of the structural pseudometrics (see Section~\ref{sec:wilt:exp} and Appendix~\ref{app:theory:exp}).

Another insight from Figure~\ref{fig:rmse} is that the degree of alignment between $d_{\text{MPNN}}$ and $d_\text{struc}$ varies by model. 
To see if the alignment is crucial for the high predictive performance of MPNNs, we examined the SRC between $\text{RMSE}(d_\text{MPNN}, d_\text{struc})$ of trained models and their performance on the training and test sets.
We used accuracy and RMSE as performance criteria.
Table~\ref{tab:corr_rmse_acc} shows that the correlation is neither strong nor consistent across settings.
Thus the alignment between $d_\text{MPNN}$ and $d_\text{struc}$ is not a key to high MPNN performance. 
This answers \textbf{Q1.5} negatively.

\section{Experimental Details for Section 5}
\label{app:exp_wilt}

\begin{table}[t]
    \centering
    \caption{The mean$\pm$std of $\text{RMSE}(d_\text{MPNN}, d)$ [$\times 10^{-2}$] over five different seeds. Each row corresponds to GIN with a given graph pooling method, trained on a given dataset.}
    \begin{tabular}{lrrrr}
        \toprule
        & $d_\text{WWL}$ & $d_\text{WLOA}$ & $\dot{d}_\text{WILT}$ & $\bar{d}_\text{WILT}$ \\
        \midrule
        \multicolumn{2}{l}{Mutagenicity} \\
        mean & 11.47$\pm$0.24 & 17.99$\pm$2.79 & \underline{3.70$\pm$0.57} & 4.98$\pm$0.78 \\ 
        sum & 14.08$\pm$0.77 & 13.05$\pm$1.44  & 3.86$\pm$0.40 & \underline{3.56$\pm$0.36} \\ 

        \multicolumn{2}{l}{ENZYMES} \\
        mean & 11.54$\pm$0.30 & 23.71$\pm$0.81 & \underline{5.32$\pm$0.20} & 7.55$\pm$0.24 \\ 
        sum & 12.10$\pm$0.84  & 9.94$\pm$1.88  & 8.60$\pm$0.35 & \underline{3.86$\pm$0.68} \\

        \multicolumn{2}{l}{Lipophilicity} \\
        mean & 14.12$\pm$0.60 & 16.95$\pm$0.52 & \underline{6.31$\pm$0.46} & 9.52$\pm$0.70 \\ 
        sum & 14.97$\pm$0.58  & 13.97$\pm$0.75 & \underline{6.49$\pm$0.50} & 6.59$\pm$0.51 \\
        \bottomrule
    \end{tabular}
    \label{tab:rmse_wwl_wloa_wilt_gin}
\end{table}

\begin{table}[t]
    \centering
    \caption{The mean$\pm$std of $\text{RMSE}(d_\text{MPNN}, d)$ [$\times 10^{-2}$] over five different seeds. Each row corresponds to a GCN or GIN with a given graph pooling method, trained on IMDB-BINARY or COLLAB dataset.}
    \begin{tabular}{clrrrr}
        \toprule
        & & $d_\text{WWL}$ & $d_\text{WLOA}$ & $\dot{d}_\text{WILT}$ & $\bar{d}_\text{WILT}$ \\
        \midrule
        \multirow{6}{*}{IMDB-BINARY} & \multicolumn{2}{l}{GCN} \\
        & mean & 16.98$\pm$2.06 & 19.04$\pm$4.39 & \underline{6.19$\pm$1.24} & 7.62$\pm$1.27 \\ 
        & sum & 16.21$\pm$2.45 & 12.01$\pm$3.81  & 9.08$\pm$4.37 & \underline{4.69$\pm$3.70} \\ 

        & \multicolumn{2}{l}{GIN} \\
        & mean & 21.32$\pm$0.25 & 19.65$\pm$0.45 & \underline{2.61$\pm$0.34} & 3.09$\pm$0.37 \\ 
        & sum & 23.49$\pm$0.42  & 21.23$\pm$0.39  & 8.09$\pm$0.89 & \underline{0.85$\pm$0.13} \\
        \midrule
        \multirow{6}{*}{COLLAB} & \multicolumn{2}{l}{GCN} \\
        & mean & 16.74$\pm$1.83 & 32.60$\pm$3.42 & \underline{8.34$\pm$1.56} & 19.49$\pm$1.61 \\ 
        & sum & 16.97$\pm$0.77 & 6.17$\pm$1.26  & 2.21$\pm$0.29 & \underline{2.03$\pm$0.41} \\ 

        & \multicolumn{2}{l}{GIN} \\
        & mean & 20.37$\pm$0.73 & 13.31$\pm$0.17 & \underline{3.58$\pm$0.65} & 10.54$\pm$2.14 \\ 
        & sum & 25.61$\pm$0.42  & 14.38$\pm$1.19  & 2.38$\pm$1.15 & \underline{1.11$\pm$0.21} \\
        \bottomrule
    \end{tabular}
    \label{tab:rmse_wwl_wloa_wilt_additional}
\end{table}

\begin{figure}
    \centering
    \includegraphics[width=\linewidth]{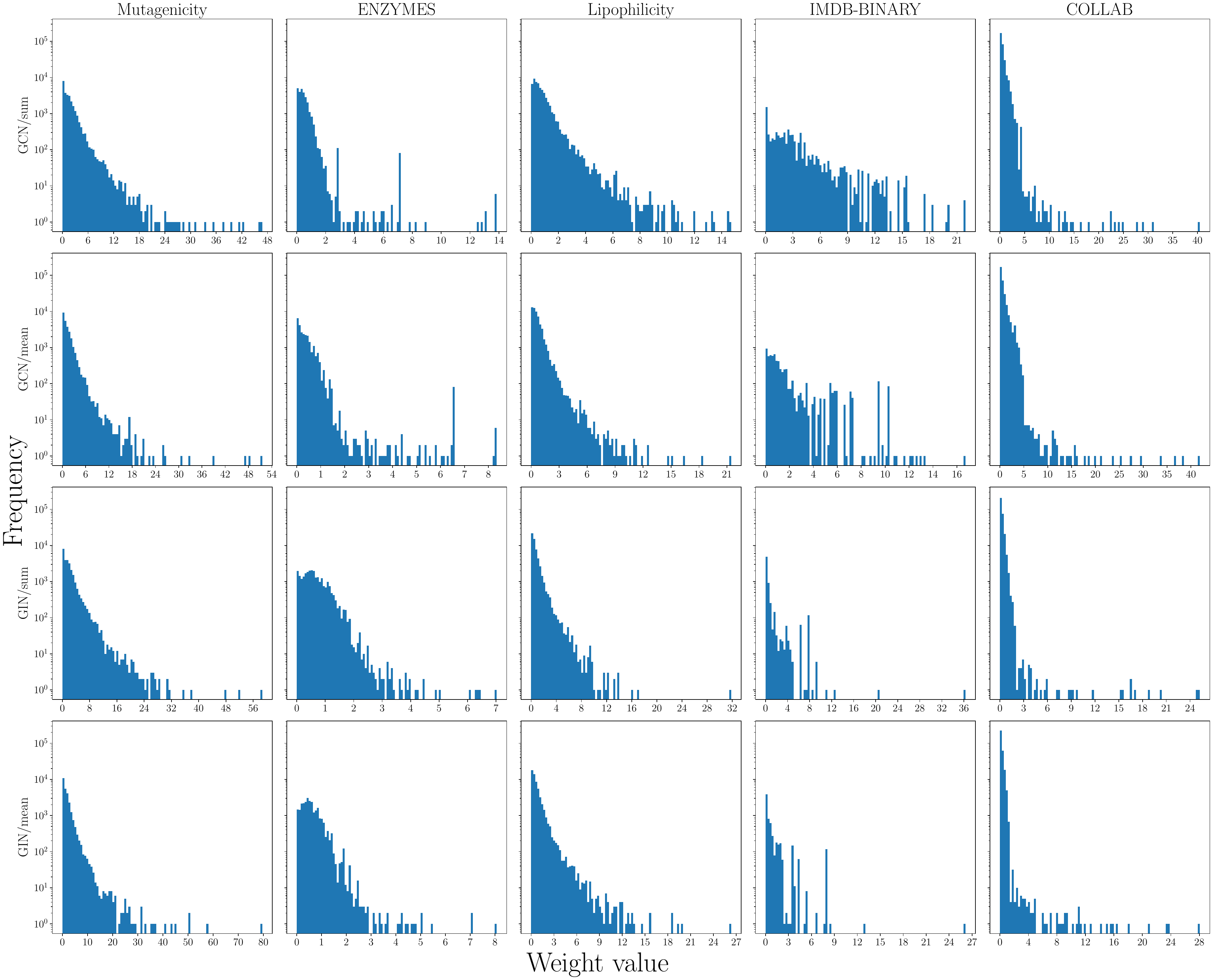}
    \caption{The distribution of edge weights of WILT after distillation from varying models trained on different datasets. The models with sum pooling were distilled into WILT with dummy normalization, while the models with mean pooling were distilled into WILT with size normalization. The log scale y-axis is shared across all plots.}
    \label{fig:weights}
\end{figure}

\begin{table}[t]
    \centering
    \caption{Example graphs with highlighted significant subgraphs corresponding to colors with top 10 largest weights. GCN with sum pooling was used. The toxicophore name is shown if the highlighted subgraph matches toxicophore substructures reported in Table 1 of \citet{kazius2005derivation}}
    \begin{tabular}{wc{2.35cm}wc{2.35cm}wc{2.35cm}wc{2.35cm}wc{2.35cm}}
        \toprule
        \multirow{3}{*}{\begin{tabular}{c}(1) three-membered\\heterocycle\\ (epoxide)\end{tabular}} & \multirow{3}{*}{(2)} & \multirow{3}{*}{(3)} & \multirow{3}{*}{(4) alphatic halide} & \multirow{3}{*}{(5)}\\ &&&&\\ &&&&\\
        \midrule
        \begin{minipage}[c][3.65cm][c]{\linewidth}
        \centering
        \scalebox{0.5}{\chemfig{[:-30]*6(-[,,,,red]?(-[,,,,red]\cn{H})-[,,,,red](-[:240,,,,red]\cn{O}?[,,,,red])(-[,,,,red]\cn{H})-[,,,,red]=-=)}} 
        \end{minipage} 
        & \begin{minipage}[c][3.65cm][c]{\linewidth}
        \centering
        \scalebox{0.5}{\chemfig{*6(=(-[,,,,red]\cn{C}~[,,,,red]\cn{N})-=(-NO_2)-=-)}} 
        \end{minipage} 
        & \begin{minipage}[c][3.65cm][c]{\linewidth}
        \centering
        \scalebox{0.5}{\chemfig{*6(=(-\cn{C}H_2-[,,,,red]\cn{N}(-[:-150,,,,red]\cn{C}H_3)-[:-30,,,,red]\cn{N}=[:-90]O)-=-(-F)=-)}} 
        \end{minipage} 
        & \begin{minipage}[c][3.65cm][c]{\linewidth}
        \centering
        \scalebox{0.5}{\chemfig{CH_3-CH(-[:60]\cn{Br})(-[:-60]\cn{Br})}}  
        \end{minipage} 
        & \begin{minipage}[c][3.65cm][c]{\linewidth}
        \centering
        \scalebox{0.5}{\chemfig{[:0]*6(-(*5(-\cn{N}=[,,,,red](-[,,,,red]\cn{N}(=[:-120]O)-[:0]O)-[,,,,red]\cn{N}(-CH_3)-))=-=-=)}}
        \end{minipage}\\
        \midrule
        \multirow{3}{*}{\begin{tabular}{c}(6) nitroso\end{tabular}} & \multirow{3}{*}{(7)} & \multirow{3}{*}{(8)} & \multirow{3}{*}{(9)} & \multirow{3}{*}{(10) alphatic halide}\\ &&&&\\ &&&&\\
        \midrule
        \begin{minipage}[c][3.65cm][c]{\linewidth}
        \centering
        \scalebox{0.5}{\chemfig{[:-54]*5(-N(-\cn{N}=[,,,,red]\cn{O})-(-CH_2-OH)-S--)}} 
        \end{minipage} 
        & \begin{minipage}[c][3.65cm][c]{\linewidth}
        \centering
        \scalebox{0.5}{\chemfig{\cn{C}(-[,,,,red]\cn{C}H_2Cl)(-[:90,,,,red]\cn{C}H_2Cl)(-[:180,,,,red]\cn{C}H_2Cl)(-[:-90,,,,red]\cn{C}H_2Cl)}} 
        \end{minipage} 
        & \begin{minipage}[c][3.65cm][c]{\linewidth}
        \centering
        \scalebox{0.5}{\chemfig{[:-54]*5(-N(-N=O)-(-CH_2-OH)-\cn{S}--)}} 
        \end{minipage} 
        & \begin{minipage}[c][3.65cm][c]{\linewidth}
        \centering
        \scalebox{0.5}{\chemfig{[:0]*6(-(*5(-N=(-\cn{N}(=[:-120]O)-[:0,,,,red]\cn{O})-N(-CH_3)-))=-=-=)}}
        \end{minipage} 
        & \begin{minipage}[c][3.65cm][c]{\linewidth}
        \centering
        \scalebox{0.5}{\chemfig{*6(=(-\cn{F})--N-=-)}}
        \end{minipage}\\
        \bottomrule
    \end{tabular}
    \label{tab:molecule_sum}
\end{table}

\begin{table}[t]
    \centering
    \caption{Example graphs with highlighted significant subgraphs corresponding to colors with top 10 largest weights. GCN with mean pooling was used. The toxicophore name is shown if the highlighted subgraph matches toxicophore substructures reported in Table 1 of \citet{kazius2005derivation}}
    \begin{tabular}{wc{2.35cm}wc{2.35cm}wc{2.35cm}wc{2.35cm}wc{2.35cm}}
        \toprule
        \multirow{3}{*}{(1)} & \multirow{3}{*}{\begin{tabular}{c}(2) three-membered\\heterocycle\\ (epoxide)\end{tabular}} & \multirow{3}{*}{(3) alphatic halide} & \multirow{3}{*}{(4)} & \multirow{3}{*}{(5)}\\ &&&&\\ &&&&\\
        \midrule
        \begin{minipage}[c][3.65cm][c]{\linewidth}
        \centering
        \scalebox{0.5}{\chemfig{\cn{C}(-[,,,,red]\cn{C}H_2Cl)(-[:90,,,,red]\cn{C}H_2Cl)(-[:180,,,,red]\cn{C}H_2Cl)(-[:-90,,,,red]\cn{C}H_2Cl)}} 
        \end{minipage} 
        & \begin{minipage}[c][3.65cm][c]{\linewidth}
        \centering
        \scalebox{0.5}{\chemfig{[:-30]*6(-[,,,,red]?(-[,,,,red]\cn{H})-[,,,,red](-[:240,,,,red]\cn{O}?[,,,,red])(-[,,,,red]\cn{H})-[,,,,red]=-=)}} 
        \end{minipage} 
        & \begin{minipage}[c][3.65cm][c]{\linewidth}
        \centering
        \scalebox{0.5}{\chemfig{CH_3-CH(-[:60]\cn{Br})(-[:-60]\cn{Br})}} 
        \end{minipage} 
        & \begin{minipage}[c][3.65cm][c]{\linewidth}
        \centering
        \scalebox{0.5}{\chemfig{\cn{P}(-[:30](*6(-=-=-=)))(-[:150](*6(-=-=-=)))(-[:-90](*6(-=-=-=)))}} 
        \end{minipage} 
        & \begin{minipage}[c][3.65cm][c]{\linewidth}
        \centering
        \scalebox{0.5}{\chemfig{[:-30]*6(-(*6(=\cn{N}-[,,,,red](-[,,,,red]\cn{N}O_2)=[,,,,red]-=))--=-=)}}
        \end{minipage}\\
        \midrule
        \multirow{3}{*}{(6)} & \multirow{3}{*}{(7)} & \multirow{3}{*}{(8) nitroso} & \multirow{3}{*}{(9)} & \multirow{3}{*}{(10)}\\ &&&&\\ &&&&\\
        \midrule
        \begin{minipage}[c][3.65cm][c]{\linewidth}
        \centering
        \scalebox{0.5}{\chemfig{\cn{C}(-[,,,,red]\cn{C}H_2-CH_3)(-[:90,,,,red]\cn{O}-CH_3)(-[:180,,,,red]\cn{C}H_3)(-[:-90,,,,red]\cn{C}H_3)}} 
        \end{minipage} 
        & \begin{minipage}[c][3.65cm][c]{\linewidth}
        \centering
        \scalebox{0.5}{\chemfig{*6(-[,,,,red](-[:-72,,,,red]?)-[,,,,red]*6(-[,,,,red](-[:-108,,,,red]?(-[:-120]N?))-[,,,,red]---[,,,,red])-[,,,,red]-[,,,,red]--)}} 
        \end{minipage} 
        & \begin{minipage}[c][3.65cm][c]{\linewidth}
        \centering
        \scalebox{0.5}{\chemfig{[:-54]*5(-N(-\cn{N}=[,,,,red]\cn{O})-(-CH_2-OH)-S--)}} 
        \end{minipage} 
        & \begin{minipage}[c][3.65cm][c]{\linewidth}
        \centering
        \scalebox{0.5}{\chemfig{*6(=(-[,,,,red]\cn{C}~[,,,,red]\cn{N})-=(-NO_2)-=-)}} 
        \end{minipage} 
        & \begin{minipage}[c][3.65cm][c]{\linewidth}
        \centering
        \scalebox{0.5}{\chemfig{[:-30]*6(-(*5(-[,,,,red]-[,,,,red]\cn{O}-[,,,,red]-[,,,,red]))=[,,,,red]-=-=)}}
        \end{minipage}\\
        \bottomrule
    \end{tabular}
    \label{tab:molecule_mean}
\end{table}

For the experiments in Section~\ref{sec:exp}, we trained 3-layer GCN and GIN with embedding dimensions of 64 on the three datasets. 
We explored both mean and sum pooling. Each model was trained on the full dataset for 100 epochs using the Adam optimizer with a learning rate of $10^{-3}$. 
Then, each model was distilled to WILT by minimizing the loss $\cL$ defined in Section~\ref{eq:loss}. 
We used the entire data set for $\cD$ in $\cL$. 
The distillation was done using gradient descent optimization with the Adam optimizer for 10 epochs.
The learning rate and batch size were set to $10^{-2}$ and 256, respectively.
See Algorithm~\ref{alg:distill} for details.

In Table~\ref{tab:rmse_wwl_wloa_wilt_gcn}, we only show the results for GCN. 
Here, we show results for GIN in Table~\ref{tab:rmse_wwl_wloa_wilt_gin}.
The overall trend is the same between Tables~\ref{tab:rmse_wwl_wloa_wilt_gcn} and \ref{tab:rmse_wwl_wloa_wilt_gin}:
$\dot{d}_\text{WILT}$ and $\bar{d}_\text{WILT}$ are much better aligned with $d_\text{MPNN}$ than $d_\text{WWL}$ and $d_\text{WLOA}$.
In addition, $d_\text{WWL}$ and $\dot{d}_\text{WILT}$ approximate $d_\text{MPNN}$(mean) better, while the opposite is true for $d_\text{MPNN}$(sum).
We also observed the same trend in the IMDB-BINARY and COLLAB datasets (see Table~\ref{tab:rmse_wwl_wloa_wilt_additional}).

Next, we plot the distribution of WILT edge weights after distillation in Figure~\ref{fig:weights}. 
While the range of edge weights varies by model and dataset, all the distributions are skewed to zero (note that the y-axis is log scale). 
This suggests that only a small fraction of all WL colors influence $d_\text{MPNN}$. 
In other words, MPNNs build up their embedding space based on a small subset of entire WL colors, regardless of model and dataset.

Finally, we visualize the WL colors with the largest weights, i.e., whose presence or absence  influence $d_\text{WILT}$ and therefore -- by approximation -- $d_\text{MPNN}$ the most. 
We use the Mutagenicity dataset as functionally important substructures are known from domain knowledge \citep{kazius2005derivation}.
It should be noted that we only consider colors that appear in at least 1\% of all graphs in the dataset.
Table~\ref{tab:molecule_sum} and \ref{tab:molecule_mean} show graphs with substructures corresponding to the WL colors with the top ten largest weights.
Table~\ref{tab:molecule_sum} is the result for GCN with sum pooling, while Table~\ref{tab:molecule_mean} is for GCN with mean pooling.
If the highlighted subgraph matches one of the seven toxicophore substructures listed in Table 1 of  \citet{kazius2005derivation}, we show the toxicophore name as well.
Four out of ten WL colors in Table~\ref{tab:molecule_sum} correspond to toxicophore substructures, while three out of ten in Table~\ref{tab:molecule_mean}. 
These are quite a lot considering that only seven toxicophore substructures are listed in Table 1 of \citet{kazius2005derivation}. 
Furthermore, there are some colors that not fully but partially match one of the substructures in \citet{kazius2005derivation}. 
For instance, (6) and (9) in Table~\ref{tab:molecule_sum} and (8) in Table~\ref{tab:molecule_mean} partially match ``aromatic nitro", while (7) in Table~\ref{tab:molecule_mean} is part of ``polycyclic aromatic system".
Note that it is impossible to identify subgraphs that perfectly match these toxicophore substructures, since our method can only identify subgraphs corresponding to a region reachable within fixed steps from a root node. 
For example, the subgraph in (1) of Table~\ref{tab:molecule_sum} is a region reachable in 2 steps from the oxygen O. 
This limiation may seem to be a drawback of our proposed method, but in fact it is not. 
It is natural to identify only subgraphs corresponding WL colors to interpret $d_\text{MPNN}$, because MPNNs can only see input graphs as a multiset of WL colors.

\end{document}